\newcommand{\etal}{\textit{et al}. }
\newcommand{\ie}{\textit{i}.\textit{e}., }
\newtheorem{theorem}{Theorem}
\newtheorem{lemma}[theorem]{Lemma}
\newtheorem{definition}{Definition}
\newtheorem{proposition}{Proposition}
\newcommand{\method}{CGProNet}
\newcommand{\cmethod}{C2GProNet}
\title{Spatiotemporal Forecasting Meets Efficiency: Causal Graph Process Neural Networks
}
\author{
  Aref Einizade\\
  Institut Polytechnique de Paris \\
  Télécom Paris, LTCI \\
  \texttt{aref.einizade@telecom-paris.fr} \\
   \And
  Fragkiskos D. Malliaros \\
  Université Paris-Saclay \\
  CentraleSupélec, Inria \\
  \texttt{fragkiskos.malliaros@centralesupelec.fr} \\
  \AND
  Jhony H. Giraldo \\
  Institut Polytechnique de Paris \\
  Télécom Paris, LTCI \\
  \texttt{jhony.giraldo@telecom-paris.fr} \\
}
\begin{document}
\maketitle

\begin{abstract}
Graph Neural Networks (GNNs) have advanced spatiotemporal forecasting by leveraging relational inductive biases among sensors (or any other measuring scheme) represented as nodes in a graph. However, current methods often rely on Recurrent Neural Networks (RNNs), leading to increased runtimes and memory use. Moreover, these methods typically operate within 1-hop neighborhoods, exacerbating the reduction of the receptive field. Causal Graph Processes (CGPs) offer an alternative, using graph filters instead of MLP layers to reduce parameters and minimize memory consumption. This paper introduces the Causal Graph Process Neural Network (\method), a non-linear model combining CGPs and GNNs for spatiotemporal forecasting. \method~employs higher-order graph filters, optimizing the model with fewer parameters, reducing memory usage, and improving runtime efficiency. We present a comprehensive theoretical and experimental stability analysis, highlighting key aspects of \method. Experiments on synthetic and real data demonstrate \method's superior efficiency, minimizing memory and time requirements while maintaining competitive forecasting performance.
\end{abstract}

\keywords{Graph Neural Networks \and Spatiotemporal Forecasting \and Causal Graph Process \and Graph Signal Processing.}

\section{Introduction}
\label{Introduction}

Forecasting of time series has gained substantial attention in recent years, finding diverse applications such as traffic projection \cite{li2018diffusion}, air pollution prediction \cite{li2018diffusion}, and energy consumption estimation \cite{hummon2012sub}.
Forecasting problems often involve predicting the functional metrics of the current time sample based on a specified number of previous time step entities \cite{lim2021time, cini2023graph}.
This approach traces back to Vector Auto-Regressive (VAR) models and their evolution into memory-aware neural network extensions, such as Recurrent Neural Networks (RNNs).
Despite their historical success, VAR-based models ignore potential connections between different sensors.
Time series data, comprising temporal measurements, is commonly captured by sensor-based systems, presenting an opportunity to leverage graph machine learning techniques \cite{cini2023graph}.
Graph-based methodologies prove particularly beneficial in applications such as air quality prediction, emphasizing the importance of accounting for geographic distances between sensors to capture similar functionalities \cite{li2018diffusion}.


Graph Neural Networks (GNNs) have recently transformed spatiotemporal forecasting, improving system robustness and accuracy by incorporating relational inductive biases \cite{benidis2022deep, cini2023graph}
{\color{black}.
However, GNNs often perform only 1-hop computations, overlooking potential long-range interactions.
Moreover, the high parameter count in GNN-based methods can limit their application due to increased memory demands \cite{cini2023graph,cini2023scalable}.
This trend in spatiotemporal forecasting may result in resource-intensive models as observed in other fields in machine learning \cite{strubell2020energy}, limiting their accessibility and practicality, especially in scenarios requiring cost-effective real-time processing on edge devices.
A cost-effective processing system could be exemplified by a distributed forecasting system where each edge device has limited computational and power resources.
}
Leveraging Causal Graph Processes (CGPs) \cite{mei2016signal,ortega2018graph,leus2023graph}, we propose a lightweight spatiotemporal GNN model for forecasting, achieving a superior balance among runtime, memory usage, and accuracy, as illustrated in Figure \ref{MotivFig}.



\begin{wrapfigure}{r}{0.5\textwidth}
    \centering
    \vspace{-10pt}
    \includegraphics[width=0.39\columnwidth]{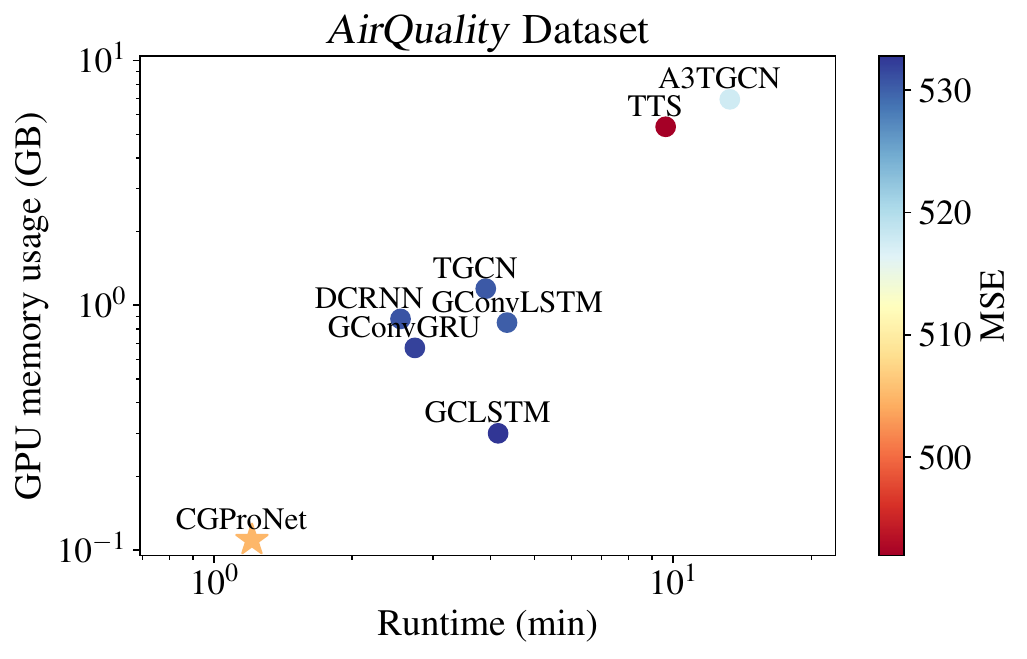}
    \caption{Comparison of Mean Squared Error (MSE), GPU memory consumption, and runtime in the forecasting task on the \textsl{AirQuality} dataset.}
    \label{MotivFig}
    \vspace{-10pt}
\end{wrapfigure}

In this paper, we present a novel Auto-Regressive (AR) process leveraging graph filters and non-linear aggregations.
Specifically, we introduce the Causal Graph Process Neural Network (\method), a drastic departure from the current state-of-the-art RNN-based methods.
Our model benefits from global (\ie beyond 1-hop interactions), local, and temporal relationships, and improves memory and computational complexity.
We also theoretically analyze the stability properties of \method, demonstrating its adept utilization of the sparsity inherent in the underlying process.
This is a notable advantage due to the intrinsic sparsity existing in real-world problems.
We test \method~in common benchmark datasets for spatiotemporal forecasting, where our model shows competitive results against previous methods.

This work makes the following main contributions:
\begin{itemize}[leftmargin=0.5cm]
    \item We introduce a general AR process that incorporates graph filters and non-linear aggregations.
    This represents a novel approach compared to previous methods.
    \item The proposed \method~model is designed to efficiently leverage global, local, and temporal relationships, leading to improvements in memory consumption and runtime.
    \item We conduct an extensive theoretical and experimental analysis of the stability properties of \method, showcasing its ability to leverage the sparsity inherent in the underlying process.
    \item The experiments demonstrate that \method~achieves an optimal balance between accuracy, runtime, and memory consumption across a wide range of datasets.
\end{itemize}


\section{Related Work}
\label{sec:related_work}


In the following, we briefly review the timeline of related spatiotemporal forecasting approaches.
For a deep and thorough survey, please refer to the review papers \cite{benidis2022deep,cini2023graph}.

\textbf{Signal processing-based methods}.
The classical causal modeling, rooted in the VAR scheme \cite{bolstad2011causal}, suffers from a substantial increase in optimization variables due to unconstrained VAR coefficient matrices.
Structural VAR (SVAR) models, introduced to extend recursive principles to spatiotemporal time series, mitigate this issue by enforcing shared sparsity patterns among VAR matrices \cite{bolstad2011causal}.
With the emergence of Graph Signal Processing (GSP) \cite{ortega2018graph,leus2023graph}, the underlying descriptive structures were unified through {\color{black} the lens of graph polynomial filtering.
For instance, GP-VAR \cite{isufi2019forecasting,isufi2016autoregressive} and GP-VARMA \cite{isufi2016autoregressive} were proposed to adapt the classic VAR and VARMA processes to graphs.
The only GSP-based graph polynomial method for considering directed graphs is the CGP model \cite{mei2016signal}, which, contrary to GP-VAR and GP-VARMA, enjoys a spatial filtering interpretation.}
The CGP model treats each temporal sample as a (filtered) graph signal on the underlying graph, offering an immediate advantage of significantly reduced parameters.
These signal processing models are currently limited to \textit{linear} cases, constraining their applicability in real-world scenarios involving non-linearities.

\textbf{GNN-based techniques}.
Depending on the merging approach of learned representations for spatial and temporal domains, the GNN models can be roughly categorized into Time-and-Space (T\&S), Time-Then-Space (TTS), and Space-Then-Time (STT) methods \cite{cini2023graph}.

In T\&S, integrating time and space modules is crucial.
A seminal contribution to modern T\&S frameworks comes from Seo \etal \cite{seo2018structured}, who introduced GConvGRU and GConvLSTM networks, merging Gated Recurrent Unit (GRU) and Long Short-Term Memory (LSTM) modules with GNNs, thus extending traditional RNNs.
Li \etal \cite{li2018diffusion} proposed the Diffusion Convolutional Recurrent Neural Network (DCRNN), employing an encoder-decoder architecture with scheduled sampling to capture both temporal dependencies and graph topology for traffic flow forecasting.
Alternatively, Chen \etal \cite{chen2022gc} introduced GCLSTM, an end-to-end network utilizing LSTM modules for temporal dependencies, and GNNs for learning from node features derived from underlying graph connections.
Notably, the high computational complexity is a significant drawback in employing T\&S frameworks.


TTS models adopt a sequential approach, initially processing time step-specific information and subsequently employing GNN operations to leverage message passing between spatially connected nodes.
Cini \etal \cite{cini2023graph} followed the TTS paradigm to develop a Spatiotemporal GNN (STGNN).
They implemented a shared GRU module among nodes to capture temporal dependencies and provide distinct learned features for each node.
These features were then propagated through the underlying graph using a DCRNN. 
They also added Multi-Layer Perceptron (MLP) layers for encoding-decoding the underlying embeddings.
It is crucial to note that the division of TTS networks into temporal and spatial segments can introduce propagation information bottleneck effects during training.

In designing STT architectures, the key concept is to enhance node representations initially and then aggregate temporal information.
Zhao \etal \cite{zhao2019t} introduced a Temporal Graph Convolutional Network (TGCN) model, merging GNNs and GRUs to learn spatial and temporal connections simultaneously, notably for traffic forecasting.
Extending TGCNs, Bai \etal \cite{bai2021a3t} proposed an Attention Temporal Graph Convolutional Network (A3T-GCN), emphasizing adjacent time step importance through GRU modules, GNNs, and attention mechanisms.
This allows simultaneous embedding of global temporal tendencies and spatial-graph connections in traffic flow data streams.
However, the frequent use of RNNs in aggregation stages makes STT models prone to high computational complexity.

In summary, designing GNN-based techniques for spatiotemporal forecasting faces two primary challenges: i) RNN-based temporal encoders introduce high memory and computational complexity, and ii) conventional GNN operations \cite{kipf2017semi} limit models to exploiting only 1-hop information, overlooking potential long-range node interactions in the graph.
\method~addresses both challenges by utilizing CGPs, yielding a memory and computationally efficient GNN framework.


\section{Proposed Framework}
\label{sec:method}

\noindent \textbf{Problem definition.}
Let \(\mathcal{G}=(\mathcal{V},\mathcal{E},\mathbf{A})\) be a directed graph with $\mathcal{V}=\{\mathop{v}_1,\ldots,\mathop{v}_N\}$ the set of nodes, ${\mathcal{E}\subseteq \{(\mathop{v}_i,\mathop{v}_j)\mid \mathop{v}_i,\mathop{v}_j\in \mathcal{V}\;{\textrm {and}}\;\mathop{v}_i\neq \mathop{v}_j\}}$ is the set of edges between nodes $\mathop{v}_i$ and $\mathop{v}_j$, and $\mathbf{A}\in\mathbb{R}^{N\times N}$ the adjacency matrix of the graph. 
A graph signal is a function $x:\mathcal{V} \to \mathbb{R}$, represented as $\mathbf{x} = [x_1,\ldots,x_N] \in \mathbb{R}^N$, where $x_i$ is the graph signal evaluated on the $i$th node \cite{ortega2018graph,leus2023graph}. 
Let $\mathbf{X}=[ \mathbf{x}_0,\mathbf{x}_1,\ldots,\mathbf{x}_{K-1}] \in\mathbb{R}^{N\times K}$ be a matrix of observations that contains \(K\)-length temporal signals living on each node of $\mathcal{G}$, where $\mathbf{x}_i \in \mathbb{R}^N;~\forall~0 \leq i < K$.
Given a temporal window $\mathbf{X}_M = [\mathbf{x}_{k-M}, \dots, \mathbf{x}_{k-1}] \in\mathbb{R}^{N \times M}$ with $M < K$, the objective of this work is to train a model $f(\mathbf{A},\mathbf{X}_M;\{\boldsymbol{\theta}_i\}_{i=1}^M)$ with parameters $\{\boldsymbol{\theta}_i\}_{i=1}^M$ to predict the future time-step $\mathbf{x}_k~\forall~k \geq M$.
The observation matrix \(\mathbf{X}\) can be thought of as \(K\) spatial graph signals on the graph $\mathcal{G}$.

\subsection{General Formulation}

We can filter a graph signal using an \(L\)-order graph polynomial filter as \cite{mei2016signal}:
\begin{equation}
\label{G_Ac}
P(\mathbf{A}, \mathbf{c})=\sum_{i=0}^{L}{c_i\mathbf{A}^i}=c_0\mathbf{I}+c_1\mathbf{A}+\ldots+c_L\mathbf{A}^L,
\end{equation}
where \(\mathbf{I}\in\mathbb{R}^{N\times N}\) is the identity matrix, and \(\mathbf{c}=[c_0,\ldots,c_L]^\top\) denotes a vector containing the scalar graph filter coefficients. 
Relying on the concept of polynomial graph filters, we assume $\mathbf{x}_k$ follows a general framework for the non-linear causal graph process as follows:
\begin{equation}
\label{General}
\mathbf{x}_{k} = \texttt{AGG}\left(\{\texttt{GF}_{\mathcal{G},\boldsymbol{\phi}_i}\mathbf{x}_{k-i}\}_{i=1}^M \right)+\mathbf{w}_{k},
\end{equation}
where, for any \(i=1,\hdots,M\),  $\texttt{GF}_{\mathcal{G},\boldsymbol{\phi}_i}\in\mathbb{R}^{N\times N}$ can have any form of polynomial graph filters (with parameters \(\boldsymbol{\phi}_i\)), \(\mathbf{w}_{k}\) is an instantaneous exogenous noise, and $\texttt{AGG}(\cdot)$ denotes an aggregation function. 
The term \textit{causal} stems from i) the dependency of the current time step on the $M$ previous ones, and ii) the fact that the underlying graph is directed.
The directionality of the edges describes causal effects between the graph nodes.
Therefore, causality in both spatial and temporal dimensions is considered.
The aggregation function takes the form:
\begin{equation}
\label{eqn:aggregation_function}
\texttt{AGG}\left(\{\mathbf{x}_{i}\}_{i=1}^M\right)=\sum_{i=1}^M{\alpha_i  \sigma(\mathbf{x}_{i})},
\end{equation}
where $\alpha_i$ is a parameter and \(\sigma(\cdot)\) is a non-linear function.
Please notice that our framework in \eqref{General} is a generalization of the well-known VAR \cite{bolstad2011causal} and CGP \cite{mei2016signal} models.
More precisely, our framework in \eqref{General} reduces to the VAR process when: i) $\{\alpha_i=1\}_{i=1}^M$, ii) $\sigma(\cdot)$ is a linear function, and iii) $\texttt{GF}_{\mathcal{G},\boldsymbol{\phi}_i}=\mathbf{R}_i\in\mathbb{R}^{N\times N}$, where \(\{\mathbf{R}_i\}_{i=1}^M\) are unconstrained coefficient matrices.
Similarly, we can recover the classical CGP when $\texttt{GF}_{\mathcal{G},\boldsymbol{\phi}_i}=\sum_{j=0}^{i}{\phi_{ij}\mathbf{A}^j}$ and the conditions i) and ii) for being VAR also hold.
For more details, please see Appendix \ref{app:VAR_CGP}.

\begin{figure}
    \centering
    \includegraphics[width=0.98\textwidth]{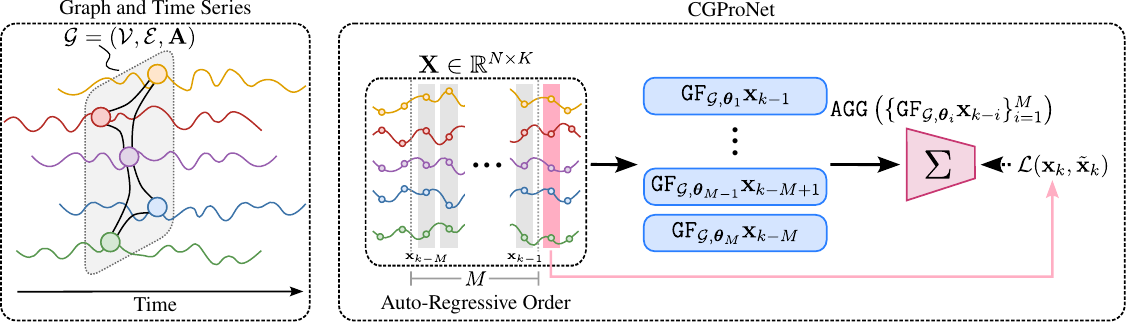}
    \caption{Pipeline of \method. 
    We process subsequent \(M\) spatial time steps ($\mathbf{x}_{k-1},\mathbf{x}_{k-2},\dots,\mathbf{x}_{k-M}$) with polynomial graph filters \(\{\texttt{GF}_{\mathcal{G},\boldsymbol{\theta}_i}\}_{i=1}^M\) to forecast $\mathbf{x}_{k}$ after the aggregation function $\texttt{AGG}\left(\{\texttt{GF}_{\mathcal{G},\boldsymbol{\theta}_i}\mathbf{x}_{k-i}\}_{i=1}^M \right)$.
    Finally, we optimize the difference between the predicted $\tilde{\mathbf{x}}_k$ and ground-truth time step $\mathbf{x}_k$ to train our \method~model.}
    \label{fig:pipeline}
\end{figure}

Following our general framework in \eqref{General}, we propose to forecast $\mathbf{x}_k$ using the following GNN model:
\begin{equation}
    \label{eqn:CGProNet_GNN}
    \tilde{\mathbf{x}}_k = \texttt{AGG}\left(\{\texttt{GF}_{\mathcal{G},\boldsymbol{\theta}_i}\mathbf{x}_{k-i}\}_{i=1}^M \right),
\end{equation}
where $\texttt{GF}_{\mathcal{G},\boldsymbol{\theta}_i}$ is a polynomial graph filter with learnable parameters $\boldsymbol{\theta}_i$, and $\texttt{AGG}(\cdot)$ takes the form as in \eqref{eqn:aggregation_function}.
In this paper, we propose a discrete polynomial graph filter in Section \ref{sec:discrete_polynomial_filter}.
Similarly, we use $\tanh(\cdot)$ as the non-linear function $\sigma(\cdot)$ in \eqref{eqn:aggregation_function}.
Finally, we can train \method~with some loss function $\mathcal{L}(\mathbf{x}_{k},\tilde{\mathbf{x}}_k)$ typically used in the context of spatiotemporal forecasting like Mean Squared Error (MSE) or Mean Absolute Error (MAE).
We illustrate our general framework in Figure \ref{fig:pipeline}.

\subsection{Causal Graph Process Neural Network (\method)}
\label{sec:discrete_polynomial_filter}

The proposed \method~model uses discrete polynomial graph filters as follows:
\begin{equation}
\label{Def_GPi}
\texttt{GF}_{\mathcal{G},\boldsymbol{\theta}_i}=P(\mathbf{A}, \boldsymbol{\theta}_i)=\sum_{j=0}^{i}{\theta_{ij}\mathbf{A}^j},
\end{equation}
where \(\boldsymbol{\theta}_i=[\theta_{i0}, \theta_{i1}, \hdots,\theta_{ii}]^\top \in \mathbb{R}^{i+1}\).
Therefore, the full GNN model following \eqref{eqn:CGProNet_GNN} is given as:
\begin{equation}
    \label{eqn:CGProNet}
    \tilde{\mathbf{x}}_k = \sum_{i=1}^M{\alpha_i  \sigma\left( \sum_{j=0}^{i}{\theta_{ij}\mathbf{A}^j} \mathbf{x}_{k-i}\right)}.
\end{equation}
The graph signal \(\mathbf{x}_{k-i}\) is the shifted version of the current graph signal \(\mathbf{x}_{k}\) by \(i\) in the time domain, and $P(\mathbf{A}, \boldsymbol{\theta}_i) \mathbf{x}_{k-i}$ is shifted to the \(i\)-hop distance in the spatial graph domain.
Therefore, we explicitly rely on higher-order graph filters (beyond 1-hop) for $M>1$.
The main intuition for this kind of modeling is transferring activity on the network at some fixed speed, \ie one spatial graph shift per time step.
Therefore, the information of the current time step cannot be affected by network order higher than that fixed speed.
This interpretation can also be considered as an extension of the spatial dimension of the light cone \cite{goerg2012licors,goerg2013mixed}, due to the possibility of living on an irregular manifold rather than a regular (uniformly sampled) lattice space \cite{mei2016signal}.

\subsection{Stability Analysis of \method}
\label{sec:stability_analysis}

We theoretically analyze the stability \cite{gama2020stability,wang2022graph} and prediction performance of the \method~model under the perturbation of the underlying graph and network parameters.
Particularly, we consider the deviation of the adjacency matrix $\mathbf{A}$ with the perturbation matrix $\mathbf{E}$ as follows:
\begin{equation}
\label{pert_model}
\hat{\mathbf{A}} = \mathbf{A} + \mathbf{E},
\end{equation}
where the adjacency matrix powers and error matrix \(\mathbf{E}=\hat{\mathbf{A}}-\mathbf{A}\) are upper-bounded by $\max_{1\le i\le M}(\|\mathbf{A}^i\|_2)\le L;\:\:\:\|\mathbf{A}-\hat{\mathbf{A}}\|_2\le\delta_\mathbf{A}$.
Similarly, for the graph filter coefficients \(\boldsymbol{\theta}\) and their associated deviations \(\hat{\boldsymbol{\theta}}\), we have $\|\boldsymbol{\theta}\|_1\le\rho_{\boldsymbol{\theta}}$, and $\|\boldsymbol{\theta}-\hat{\boldsymbol{\theta}}\|_1\le \delta_{\boldsymbol{\theta}}$.
\begin{definition}
\label{Lipschitz}
We call a function $\sigma(\cdot)$ as Lipschitz if there exists a positive constant $\upsilon$ such that $\forall~x_1, x_2 \in \mathbb{R}: \vert\sigma(x_1)-\sigma(x_2)\vert \le \upsilon \vert x_1-x_2\vert$.
\end{definition}
In the following, we first analyze the stability of the graph polynomial filter \(P(\mathbf{A},\boldsymbol{\theta}_i)\) under the inaccuracies for the adjacency matrix \(\mathbf{A}\) and graph filter coefficients \(\boldsymbol{\theta}_i\). The next proposition gives the desired upper bound. All proofs are provided in Appendix \ref{app:proofs}.
\begin{proposition}
\label{prop1}
Consider the graph filter deviations as \(\hat{\theta}_{ij}=\theta_{ij}+z_{ij}\). Then, with the assumptions of the adjacency matrix powers \(\{\mathbf{A}^i\}_{i=1}^M\), error matrix \(\mathbf{E}=\hat{\mathbf{A}}-\mathbf{A}\), graph filter coefficients \(\boldsymbol{\theta}_i\) and their associated deviations \(\hat{\boldsymbol{\theta}}_i\) are respectively upper-bounded by \(\max_{1\le i\le M}(\|\mathbf{A}^i\|_2)\le L\), \(\|\mathbf{A}-\hat{\mathbf{A}}\|_2\le\delta_\mathbf{A}\), \(\|\boldsymbol{\theta}_i\|_1\le\rho_{\boldsymbol{\theta}_i}\), and \(\|\boldsymbol{\theta}_i-\hat{\boldsymbol{\theta}}_i\|_1=\|\mathbf{z}_i\|_1\le \delta_{\boldsymbol{\theta}_i}\), an upper bound for the perturbed graph filter polynomial \(P(\hat{\mathbf{A}},\hat{\boldsymbol{\theta}}_i)\) can be stated as follows:
\begin{equation}
\label{GF_bound}
\begin{split}
&\left\|P(\hat{\mathbf{A}},\hat{\boldsymbol{\theta}}_i)\right\|_2\le (\rho_{\boldsymbol{\theta}_i}+\delta_{\boldsymbol{\theta}_i})(L+\delta_{\mathbf{A}}\hat{L}_M(\delta_\mathbf{A})),
\end{split}
\end{equation}
where $\hat{L}_M(\delta)=\max_{1\le i\le M}{\frac{(L+\delta)^i-L^i}{\delta}}$.

\end{proposition}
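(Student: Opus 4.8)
The plan is to bound the perturbed filter directly by the triangle inequality and submultiplicativity of the spectral norm, isolating the coefficient contribution from the matrix-power contribution. First I would expand the perturbed polynomial $P(\hat{\mathbf{A}},\hat{\boldsymbol{\theta}}_i)=\sum_{j=0}^{i}\hat{\theta}_{ij}\hat{\mathbf{A}}^j$ and apply the triangle inequality together with factoring out the largest power, giving
\begin{equation}
\left\|P(\hat{\mathbf{A}},\hat{\boldsymbol{\theta}}_i)\right\|_2 \le \left(\sum_{j=0}^{i}|\hat{\theta}_{ij}|\right)\max_{0\le j\le i}\|\hat{\mathbf{A}}^j\|_2.
\end{equation}
The coefficient factor is immediate: since $\hat{\theta}_{ij}=\theta_{ij}+z_{ij}$, the $\ell_1$ triangle inequality yields $\sum_{j=0}^{i}|\hat{\theta}_{ij}| = \|\hat{\boldsymbol{\theta}}_i\|_1 \le \|\boldsymbol{\theta}_i\|_1 + \|\mathbf{z}_i\|_1 \le \rho_{\boldsymbol{\theta}_i}+\delta_{\boldsymbol{\theta}_i}$, which produces the first factor of the claimed bound.

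The core of the argument is controlling $\|\hat{\mathbf{A}}^j\|_2$. I would split $\|\hat{\mathbf{A}}^j\|_2 \le \|\mathbf{A}^j\|_2 + \|\hat{\mathbf{A}}^j-\mathbf{A}^j\|_2$, bounding the first term by $L$ via the power assumption, and for the difference invoke the telescoping identity $\hat{\mathbf{A}}^j-\mathbf{A}^j = \sum_{k=0}^{j-1}\hat{\mathbf{A}}^k\,\mathbf{E}\,\mathbf{A}^{j-1-k}$ with $\mathbf{E}=\hat{\mathbf{A}}-\mathbf{A}$. Taking norms and applying submultiplicativity together with $\|\mathbf{A}\|_2\le L$ (the $i=1$ case of the power bound), $\|\hat{\mathbf{A}}\|_2 \le L+\delta_{\mathbf{A}}$, and $\|\mathbf{E}\|_2 \le \delta_{\mathbf{A}}$ gives
\begin{equation}
\|\hat{\mathbf{A}}^j-\mathbf{A}^j\|_2 \le \delta_{\mathbf{A}}\sum_{k=0}^{j-1}(L+\delta_{\mathbf{A}})^k L^{\,j-1-k}.
\end{equation}

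The decisive step is recognizing this sum in closed form through the factorization $a^j-b^j=(a-b)\sum_{k=0}^{j-1}a^k b^{\,j-1-k}$ with $a=L+\delta_{\mathbf{A}}$ and $b=L$, which collapses the right-hand side to $(L+\delta_{\mathbf{A}})^j-L^j$. Writing this as $\delta_{\mathbf{A}}\tfrac{(L+\delta_{\mathbf{A}})^j-L^j}{\delta_{\mathbf{A}}}$ and maximizing over $1\le j\le i\le M$ yields $\|\hat{\mathbf{A}}^j-\mathbf{A}^j\|_2 \le \delta_{\mathbf{A}}\hat{L}_M(\delta_{\mathbf{A}})$, hence $\|\hat{\mathbf{A}}^j\|_2 \le L+\delta_{\mathbf{A}}\hat{L}_M(\delta_{\mathbf{A}})$ for each $j\ge 1$ (the $j=0$ case reduces to $\|\mathbf{I}\|_2=1$, which is dominated provided $L\ge 1$). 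Substituting both factors into the first display completes the bound. I expect the main obstacle to be the clean handling of the matrix-power difference: the telescoping expansion and its resummation into $(L+\delta_{\mathbf{A}})^j-L^j$ are what make the bound tight, and one must be careful to invoke the scalar bound $\|\mathbf{A}\|_2\le L$ rather than only the higher-power bounds $\|\mathbf{A}^i\|_2\le L$, so that submultiplicativity produces exactly the geometric terms that telescope.
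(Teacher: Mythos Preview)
Your proposal is correct and follows essentially the same route as the paper: both expand the perturbed polynomial, separate the coefficient $\ell_1$-mass from a uniform bound on $\|\hat{\mathbf{A}}^j\|_2$, and control the latter via $\|\hat{\mathbf{A}}^j\|_2\le\|\mathbf{A}^j\|_2+\|\hat{\mathbf{A}}^j-\mathbf{A}^j\|_2$. The only difference is that the paper imports the bound $\max_j\|\hat{\mathbf{A}}^j-\mathbf{A}^j\|_2\le\delta_{\mathbf{A}}\hat{L}_M(\delta_{\mathbf{A}})$ as a black-box lemma from \cite{mei2016signal}, whereas you derive it explicitly through the telescoping identity and the closed-form resummation $(L+\delta_{\mathbf{A}})^j-L^j$; your remark about the $j=0$ term needing $L\ge 1$ is also a point the paper leaves implicit.
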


The bound in \eqref{GF_bound} implies that the polynomial graph filter \eqref{Def_GPi} is affected by i) the process sparsity \(\rho_{\boldsymbol{\theta}_i}\), ii) the scale and accuracy of the input graph, and iii) the AR order $M$.
The next proposition studies the stability of the graph polynomial filters \eqref{Def_GPi}.
\begin{proposition}
\label{prop2}
Consider the graph filter deviations as \(\hat{\theta}_{ij}=\theta_{ij}+z_{ij}\). Then, holding the assumptions of Proposition \ref{prop1}, the stability of the graph filter polynomial \(P(\mathbf{A},\boldsymbol{\theta}_i)\) can be stated as follows:
\begin{equation}
\label{GF_stab}
\left\|P(\hat{\mathbf{A}},\hat{\boldsymbol{\theta}}_i)-P(\mathbf{A},\boldsymbol{\theta}_i)\right\|_2 \le \rho_{\boldsymbol{\theta}_i}\delta_{\mathbf{A}}\hat{L}_M(\delta_\mathbf{A}) + \delta_{\boldsymbol{\theta}_i}(L+\delta_{\mathbf{A}}\hat{L}_M(\delta_\mathbf{A})).
\end{equation}
\end{proposition}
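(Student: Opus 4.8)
The plan is to split the perturbation of the filter into a purely spectral (graph) part and a purely coefficient part, and to control each separately via the triangle inequality. Concretely, I would insert the hybrid term $P(\hat{\mathbf{A}},\boldsymbol{\theta}_i)$ and write
\begin{equation*}
P(\hat{\mathbf{A}},\hat{\boldsymbol{\theta}}_i)-P(\mathbf{A},\boldsymbol{\theta}_i)=\underbrace{\left[P(\hat{\mathbf{A}},\hat{\boldsymbol{\theta}}_i)-P(\hat{\mathbf{A}},\boldsymbol{\theta}_i)\right]}_{\text{coefficient error}}+\underbrace{\left[P(\hat{\mathbf{A}},\boldsymbol{\theta}_i)-P(\mathbf{A},\boldsymbol{\theta}_i)\right]}_{\text{graph error}},
\end{equation*}
so that the first bracket equals $\sum_{j=0}^{i}z_{ij}\hat{\mathbf{A}}^j$ and the second equals $\sum_{j=0}^{i}\theta_{ij}(\hat{\mathbf{A}}^j-\mathbf{A}^j)$. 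Taking $\|\cdot\|_2$, applying the triangle inequality termwise, and pulling out $|z_{ij}|$ and $|\theta_{ij}|$ reduces everything to bounding the two matrix quantities $\|\hat{\mathbf{A}}^j\|_2$ and $\|\hat{\mathbf{A}}^j-\mathbf{A}^j\|_2$ for $1\le j\le i\le M$ (the $j=0$ contribution cancels in the graph error and leaves $\|\mathbf{I}\|_2=1\le L$ in the coefficient error).

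The crux is the bound on the difference of matrix powers, which I would obtain from the telescoping identity $\hat{\mathbf{A}}^j-\mathbf{A}^j=\sum_{l=0}^{j-1}\hat{\mathbf{A}}^{l}\mathbf{E}\,\mathbf{A}^{j-1-l}$ with $\mathbf{E}=\hat{\mathbf{A}}-\mathbf{A}$. Using submultiplicativity together with $\|\mathbf{A}\|_2\le L$, $\|\hat{\mathbf{A}}\|_2\le L+\delta_{\mathbf{A}}$, and $\|\mathbf{E}\|_2\le\delta_{\mathbf{A}}$ gives $\|\hat{\mathbf{A}}^j-\mathbf{A}^j\|_2\le\delta_{\mathbf{A}}\sum_{l=0}^{j-1}(L+\delta_{\mathbf{A}})^l L^{j-1-l}=(L+\delta_{\mathbf{A}})^j-L^j$, where the last equality is the scalar factorization $b^j-a^j=(b-a)\sum_{l}b^l a^{j-1-l}$. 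Dividing by $\delta_{\mathbf{A}}$ and maximizing over $1\le j\le M$ recovers exactly $\hat{L}_M(\delta_{\mathbf{A}})$, so that $\|\hat{\mathbf{A}}^j-\mathbf{A}^j\|_2\le\delta_{\mathbf{A}}\hat{L}_M(\delta_{\mathbf{A}})$; this is the same estimate already used in Proposition \ref{prop1} and can simply be reused. The triangle inequality then also yields $\|\hat{\mathbf{A}}^j\|_2\le\|\mathbf{A}^j\|_2+\|\hat{\mathbf{A}}^j-\mathbf{A}^j\|_2\le L+\delta_{\mathbf{A}}\hat{L}_M(\delta_{\mathbf{A}})$.

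Substituting these two estimates finishes the argument: the coefficient error is bounded by $(L+\delta_{\mathbf{A}}\hat{L}_M(\delta_{\mathbf{A}}))\sum_{j=0}^{i}|z_{ij}|=(L+\delta_{\mathbf{A}}\hat{L}_M(\delta_{\mathbf{A}}))\|\mathbf{z}_i\|_1\le\delta_{\boldsymbol{\theta}_i}(L+\delta_{\mathbf{A}}\hat{L}_M(\delta_{\mathbf{A}}))$, while the graph error is bounded by $\delta_{\mathbf{A}}\hat{L}_M(\delta_{\mathbf{A}})\sum_{j=0}^{i}|\theta_{ij}|=\delta_{\mathbf{A}}\hat{L}_M(\delta_{\mathbf{A}})\|\boldsymbol{\theta}_i\|_1\le\rho_{\boldsymbol{\theta}_i}\delta_{\mathbf{A}}\hat{L}_M(\delta_{\mathbf{A}})$. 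Adding the two gives precisely the claimed bound \eqref{GF_stab}. I expect the only genuine obstacle to be the power-difference step: one must resist bounding $\|\hat{\mathbf{A}}^j-\mathbf{A}^j\|_2$ crudely by $\|\hat{\mathbf{A}}^j\|_2+\|\mathbf{A}^j\|_2$ (which would not expose a single factor of $\delta_{\mathbf{A}}$ nor produce $\hat{L}_M$), and instead use the telescoping decomposition so that one factor of $\mathbf{E}$ is extracted explicitly. A minor technical point is the $j=0$ case, which needs $L\ge 1$ (i.e.\ $\|\mathbf{I}\|_2\le L$) for the coefficient-error estimate to read uniformly as $L+\delta_{\mathbf{A}}\hat{L}_M(\delta_{\mathbf{A}})$.
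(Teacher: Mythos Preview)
Your proposal is correct and follows essentially the same approach as the paper: both split $P(\hat{\mathbf{A}},\hat{\boldsymbol{\theta}}_i)-P(\mathbf{A},\boldsymbol{\theta}_i)$ into the sums $\sum_j \theta_{ij}(\hat{\mathbf{A}}^j-\mathbf{A}^j)$ and $\sum_j z_{ij}\hat{\mathbf{A}}^j$, then invoke the bounds $\|\hat{\mathbf{A}}^j-\mathbf{A}^j\|_2\le\delta_{\mathbf{A}}\hat{L}_M(\delta_{\mathbf{A}})$ and $\|\hat{\mathbf{A}}^j\|_2\le L+\delta_{\mathbf{A}}\hat{L}_M(\delta_{\mathbf{A}})$ before summing against $\|\boldsymbol{\theta}_i\|_1$ and $\|\mathbf{z}_i\|_1$. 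The only differences are cosmetic: you derive the power-difference estimate explicitly via telescoping (the paper simply cites it as Lemma~\ref{lemma1}), and your caveat about the $j=0$ term needing $L\ge 1$ is a valid technical point that the paper glosses over.
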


The first term in the bound \eqref{GF_stab} is heavily affected by the process sparsity $\rho_{\boldsymbol{\theta}_i}$ and the accuracy of the underlying graph $\delta_{\mathbf{A}}$.
Furthermore, the second term in \eqref{GF_stab} illustrates the effect of scale $L$ and coefficient accuracy $\delta_{\boldsymbol{\theta}_i}$. 
The next theorem investigates the conditions under which the proposed \method~model can benefit from these stability properties.
\begin{theorem}
\label{thm1}
Under holding the assumptions of Propositions \ref{prop1} and \ref{prop2}, with the assumptions of the non-linearity function $\sigma(\cdot)$ being Lipschitz, \(\sigma(0)=0\), mixing coeffients \(\boldsymbol{\alpha}\) and their associated deviations \(\hat{\boldsymbol{\alpha}}=\boldsymbol{\alpha}+\mathbf{e}\) are respectively upper-bounded as \(\|\boldsymbol{\alpha}\|_1\le\rho_{\boldsymbol{\alpha}}\), and \(\|\boldsymbol{\alpha}-\hat{\boldsymbol{\alpha}}\|_1\le\delta_{\boldsymbol{\alpha}}\), the difference between the true \(\mathbf{x}_{k}\) and predicted outputs \(\tilde{\mathbf{x}}_k\) of the proposed \method~for the future time step \(k\) is upper-bounded as:
\begin{equation}
\label{stab_bound}
\resizebox{\hsize}{!}{$\|\tilde{\mathbf{x}}_k-\mathbf{x}_{k}\|_2 \le\rho_{\boldsymbol{\alpha}}\left(\rho_{\boldsymbol{\theta}}\delta_{\mathbf{A}}\hat{L}_M(\delta_\mathbf{A}) + \delta_{\boldsymbol{\theta}}(L+\delta_{\mathbf{A}}\hat{L}_M(\delta_\mathbf{A}))\right) \|\mathbf{X}\|_{1,2} +\delta_{\boldsymbol{\alpha}}(\rho_{\boldsymbol{\theta}}+\delta_{\boldsymbol{\theta}})(L+\delta_{\mathbf{A}}\hat{L}_M(\delta_\mathbf{A})) \|\mathbf{X}\|_{1,2}.$}
\end{equation}
\end{theorem}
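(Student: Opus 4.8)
The plan is to treat both $\mathbf{x}_k$ and $\tilde{\mathbf{x}}_k$ as evaluations of the same model form \eqref{eqn:CGProNet} on the common input window $\{\mathbf{x}_{k-i}\}_{i=1}^M$, the former with the exact triple $(\mathbf{A},\boldsymbol{\theta},\boldsymbol{\alpha})$ and the latter with the perturbed triple $(\hat{\mathbf{A}},\hat{\boldsymbol{\theta}},\hat{\boldsymbol{\alpha}})$. Writing $\mathbf{y}_i=P(\mathbf{A},\boldsymbol{\theta}_i)\mathbf{x}_{k-i}$ and $\hat{\mathbf{y}}_i=P(\hat{\mathbf{A}},\hat{\boldsymbol{\theta}}_i)\mathbf{x}_{k-i}$, the difference is $\tilde{\mathbf{x}}_k-\mathbf{x}_k=\sum_{i=1}^M[\hat{\alpha}_i\sigma(\hat{\mathbf{y}}_i)-\alpha_i\sigma(\mathbf{y}_i)]$. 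The first step is the add-and-subtract decomposition $\hat{\alpha}_i\sigma(\hat{\mathbf{y}}_i)-\alpha_i\sigma(\mathbf{y}_i)=(\hat{\alpha}_i-\alpha_i)\sigma(\hat{\mathbf{y}}_i)+\alpha_i(\sigma(\hat{\mathbf{y}}_i)-\sigma(\mathbf{y}_i))$, which cleanly separates the coefficient perturbation (feeding the $\delta_{\boldsymbol{\alpha}}$ term of \eqref{stab_bound}) from the filter perturbation (feeding the $\rho_{\boldsymbol{\alpha}}$ term).

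Next, I would apply the triangle inequality over $i$ and push the nonlinearity through each summand using the Lipschitz hypothesis. Because $\sigma(0)=0$, the bound $\|\sigma(\hat{\mathbf{y}}_i)\|_2\le\upsilon\|\hat{\mathbf{y}}_i\|_2$ holds entrywise and hence in $\ell_2$, while Lipschitzness gives $\|\sigma(\hat{\mathbf{y}}_i)-\sigma(\mathbf{y}_i)\|_2\le\upsilon\|\hat{\mathbf{y}}_i-\mathbf{y}_i\|_2$ (taking $\upsilon=1$, as for $\tanh$, recovers the stated constants). Then submultiplicativity of the spectral norm gives $\|\hat{\mathbf{y}}_i\|_2\le\|P(\hat{\mathbf{A}},\hat{\boldsymbol{\theta}}_i)\|_2\|\mathbf{x}_{k-i}\|_2$ and $\|\hat{\mathbf{y}}_i-\mathbf{y}_i\|_2\le\|P(\hat{\mathbf{A}},\hat{\boldsymbol{\theta}}_i)-P(\mathbf{A},\boldsymbol{\theta}_i)\|_2\|\mathbf{x}_{k-i}\|_2$. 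At this point Proposition~\ref{prop1} bounds the first filter factor and Proposition~\ref{prop2} bounds the second, producing the two bracketed expressions appearing in \eqref{stab_bound}.

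The final step is to collapse the sums over $i$. I would pass from the per-order constants $\rho_{\boldsymbol{\theta}_i},\delta_{\boldsymbol{\theta}_i}$ to the uniform constants $\rho_{\boldsymbol{\theta}}=\max_i\rho_{\boldsymbol{\theta}_i}$ and $\delta_{\boldsymbol{\theta}}=\max_i\delta_{\boldsymbol{\theta}_i}$, so the Proposition~\ref{prop1}/\ref{prop2} bounds become independent of $i$ and factor out of the sums. What remains is $\sum_i|\hat{\alpha}_i-\alpha_i|\,\|\mathbf{x}_{k-i}\|_2$ and $\sum_i|\alpha_i|\,\|\mathbf{x}_{k-i}\|_2$; I bound each column norm by $\max_j\|\mathbf{x}_j\|_2$, which is exactly the $\ell_1\!\to\!\ell_2$ operator norm $\|\mathbf{X}\|_{1,2}$, and then apply the Hölder pairing $\sum_i|c_i|\,b_i\le\|\mathbf{c}\|_1\max_i b_i$ to extract $\delta_{\boldsymbol{\alpha}}\|\mathbf{X}\|_{1,2}$ and $\rho_{\boldsymbol{\alpha}}\|\mathbf{X}\|_{1,2}$ respectively. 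Summing the two contributions yields \eqref{stab_bound}.

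The main obstacle is bookkeeping rather than any deep inequality: the statement is phrased with the aggregate constants $\rho_{\boldsymbol{\theta}},\delta_{\boldsymbol{\theta}},\rho_{\boldsymbol{\alpha}},\delta_{\boldsymbol{\alpha}}$ and the mixed norm $\|\mathbf{X}\|_{1,2}$, so the care lies in (i) choosing the uniform-over-$i$ filter constants that let Propositions~\ref{prop1} and \ref{prop2} come out of the summation, and (ii) recognizing $\|\mathbf{X}\|_{1,2}$ as the maximum column $\ell_2$-norm so the Hölder step factors out precisely $\|\boldsymbol{\alpha}\|_1=\rho_{\boldsymbol{\alpha}}$ and $\|\boldsymbol{\alpha}-\hat{\boldsymbol{\alpha}}\|_1=\delta_{\boldsymbol{\alpha}}$. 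I would also confirm that evaluating both outputs on the shared window, so that the exogenous noise $\mathbf{w}_k$ does not enter, matches the intended reading of \emph{true} versus \emph{predicted}.
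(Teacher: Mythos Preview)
Your proposal is correct and follows essentially the same approach as the paper's proof: the identical add-and-subtract decomposition on $\hat{\alpha}_i\sigma(\hat{\mathbf{y}}_i)-\alpha_i\sigma(\mathbf{y}_i)$, the same use of $\sigma(0)=0$ plus Lipschitzness to strip the nonlinearity, the same invocation of Propositions~\ref{prop1} and~\ref{prop2} for the two filter factors, and the same collapse via $\|\mathbf{x}_{k-i}\|_2\le\|\mathbf{X}\|_{1,2}$ together with the $\ell_1$ bounds on $\boldsymbol{\alpha}$ and $\mathbf{e}$. Your write-up is in fact a bit more explicit than the paper's about the Lipschitz constant $\upsilon=1$, about taking $\rho_{\boldsymbol{\theta}}=\max_i\rho_{\boldsymbol{\theta}_i}$ and $\delta_{\boldsymbol{\theta}}=\max_i\delta_{\boldsymbol{\theta}_i}$, and about the H\"older/operator-norm reading of $\|\mathbf{X}\|_{1,2}$, all of which the paper leaves implicit.
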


The bound in (\ref{stab_bound}) reveals interesting aspects of \method's stability against possible perturbations.
Precisely, Theorem \ref{thm1} states that \method~can also benefit from the process and mixing sparsity $\rho_{\boldsymbol{\theta}}$ and $\rho_{\boldsymbol{\alpha}}$, respectively.
This suggests that using \(\ell_1\)-norm regularization in the loss function of \method~could enhance its stability.
Similarly, the bound in \eqref{stab_bound} relies on \(L\) and \(\hat{L}_M(\delta_\mathbf{A})\) (upper bounded by the adjacency matrix powers), which implies that a high AR order could make the network more susceptible to perturbations.
The bound on the adjacency matrix powers \(L\) can also have direct effects on the stability of the network, which motivates the use of normalization techniques on the adjacency matrix.
Intuitively, the higher the error bounds on the adjacency matrix \(\delta_{\mathbf{A}}\) and filter coefficients \(\delta_{\boldsymbol{\theta}}\), the more vulnerable the network in terms of stability.
The capability of the proposed network for benefiting from the sparsity of the underlying process obtained from (\ref{stab_bound}) is experimentally validated in Section \ref{sec:experiments_results}.

\subsection{Computational and Memory Complexity of \method}

The number of learnable parameters of \method~is given by \(M+\frac{M(M+3)}{2}\), corresponding to \(\{\alpha_i\}_{i=1}^M\) and \(\{\boldsymbol{\theta}_i\}_{i=1}^M\).
Since \(M\ll N\), \method~can handle intricated non-linear relationships through the time steps while maintaining small memory usage.
As the learnable parameters do not depend on the graph size $N$ or the number of temporal samples $K$, \method~enjoys a serious reduction of the number of learnable parameters, which is comprehensively investigated in more detail in the experiments.

\method~has the dominant computational complexity of \(\mathcal{O}(\frac{M(M+3)}{2}|\mathcal{E}|)\approx\mathcal{O}(M^2\vert \mathcal{E} \vert)\) in the case of leveraging the recursive diffusion implementation for the graph filters \cite{defferrard2016convolutional}.
The computational complexity is linear in terms of the number of edges and quadratic for AR order, which makes it desirable for learning from large sparse graphs and not very large AR orders.
The setup of large-scale sparse graphs and low AR orders is a typical scenario in real-world data.
Alternatively, \method~could also be implemented by precomputing the matrix powers \(\{\mathbf{A}^i\}_{i=1}^M\) to avoid the recursion of the diffusion strategy. 
However, the precomputing strategy is only feasible for medium-size graphs since $\mathbf{A}^i$ becomes dense quickly when $i>1$.
{\color{black}
The number of parameters in \method~could be reduced by adopting \textit{continuous} polynomial graph filters.
Appendix \ref{C2GProNet_det} provides the continuous extension of \method~along with its theoretical stability analysis and experimental results.
}

\subsection{{\color{black}{\method~to Forecast Multiple Horizons}}}

{
\color{black}

We extend \method~for the case of forecasting multiple horizons $H$ by slightly adjusting \eqref{eqn:CGProNet}.
We outline two approaches: (i) MLP head and (ii) adaptive prediction with associative weights.

\textbf{MLP head.}
In the MLP-head approach for multiple horizons forecasting, called \method$_{\text{MLP}}$, we transform the dimension of the output of \method~using an MLP with parameters $\boldsymbol{\Phi}_H\in\mathbb{R}^{1\times H}$ and activation function $\sigma(\cdot)$.
The MLP head adapts the output dimension of the regular \method~to the number of horizons $H$.
Therefore, the output is now a matrix $\tilde{\mathbf{X}}_k\in\mathbb{R}^{N\times H}$,  which is formulated as:
\begin{equation}
    \label{eqn:CGProNet2}
    \tilde{\mathbf{X}}_k = \sigma\left(\left[\sum_{i=1}^M{\alpha_i  \sigma\left( \sum_{j=0}^{i}{\theta_{ij}\mathbf{A}^j} \mathbf{x}_{k-i}\right)}\right]\boldsymbol{\Phi}_H\right).
\end{equation}
The main advantage of \eqref{eqn:CGProNet2} is the simplicity in both complexity and number of learnable parameters since it only adds $H$ learnable parameters to the base \method.

\textbf{Adaptive prediction with associative weights.}
We can incorporate temporal dependency in multiple horizons forecasting using the following definition:
\begin{equation}
    \label{eqn:recursion}
    f_{M}\left(\mathbf{C};\mathbf{A},\{\boldsymbol{\theta}_i\}_{i=1}^{M},\{\alpha_i\}_{i=1}^{M}\right):=\sum_{i=1}^M{\alpha_i  \sigma\left( \sum_{j=0}^{i}{\theta_{ij}\mathbf{A}^j} \mathbf{c}_{M-i+1}\right)},
\end{equation}
where $\mathbf{C}=[\mathbf{c}_1,\hdots,\mathbf{c}_M]\in\mathbb{R}^{N\times M}$.
Using \eqref{eqn:recursion}, we can express the \(h\)-th predicted time sample (for $h=1,\hdots,H$) as:
\begin{equation}
\label{eqn:CGProNet_Adaptive}
\tilde{\mathbf{x}}_{k+h-1}=f_{M}\left(\mathbf{X}_k^{(h)};\mathbf{A},\{\boldsymbol{\theta}^{(h)}_i\}_{i=1}^{M},\{\alpha_i\}_{i=1}^{M}\right),
\end{equation}
where we have learnable parameters $\boldsymbol{\theta}^{(h)}_i$ for $h=\{1,\dots,H \}$, $\mathbf{X}_k^{(1)}=[\mathbf{x}_{k-M},\hdots,\mathbf{x}_{k-1}]\in\mathbb{R}^{N\times M}$, and $\mathbf{X}_k^{(h>1)}=[\mathbf{x}_{k-M+h-1},\hdots,\mathbf{x}_{k-1},\tilde{\mathbf{x}}_{k},\hdots,\tilde{\mathbf{x}}_{k+h-2}]\in\mathbb{R}^{N\times M}$.
In other words, the outputs of the model in \eqref{eqn:CGProNet_Adaptive}, called \method$_\text{Ada}$, for $h-1$ previous time samples are included in the prediction process of the $h$-th time sample.
The number of parameters of \method$_\text{Ada}$ scales linearly with \(H\).
Appendix \ref{app:additional_multiH_forecasting} presents one additional extension for multiple horizon forecasting with some results.







}

\section{Experimental Results}
\label{sec:experiments_results}

In this section, we study different aspects of \method~on synthetic and real-world time series regarding performance and stability properties.
Similarly, we conduct an ablation study to analyze different aspects of the design choices of \method.
We compare \method~against the state-of-the-art methods TTS \cite{cini2023graph}, DCRNN \cite{li2018diffusion}, A3TGCN \cite{bai2021a3t}, GCLSTM \cite{chen2022gc}, GConvGRU \cite{seo2018structured}, GConvLSTM \cite{seo2018structured}, TGCN \cite{zhao2019t}, \textcolor{black}{GGNM \cite{xiong2024gated}, and STCN \cite{liu2022spatial,gao2023dynamic}.}


\textbf{Implementation details.}
We use PyTorch Geometric Temporal \cite{rozemberczki2021pytorch} to implement previous state-of-the-art methods.
We use Torch Spatiotemporal (TSL) \cite{Cini_Torch_Spatiotemporal_2022} to work with the real-world time series.
We construct the underlying graphs using the thresholded kernelized distances between sensors \cite{Cini_Torch_Spatiotemporal_2022}.
We take \(M=3\) for all methods and datasets unless we mention it.
The detailed hyperparameters for the synthetic and real-world datasets can be found in Appendix \ref{syn_set} and \ref{real_set}, respectively.

\textbf{Synthetic spatiotemporal time series.}
Here, we generate directed binary Erd\H{o}s-R\'enyi graphs $\mathcal{G}$ with $N$ nodes.
Therefore, we use the generative model in (\ref{General}) with the polynomial graph filter (\ref{Def_GPi}) to produce spatiotemporal time series in $\mathcal{G}$.
We set $\alpha_i=1;~\forall~1 \leq i \leq M$ and $\sigma(\cdot)=\sigma(\cdot)$ in \eqref{eqn:aggregation_function}.
Finally, we model $\mathbf{w}(k)$ in \eqref{General} as Gaussian noise with zero mean and variance one.
Further details about the synthetic data are provided in Appendix \ref{app:synthetic_dataset}.

We perform experiments in the synthetic data with variations in i) the Signal-to-Noise Ratio (SNR), ii) the number of time steps $K$, iii) the number of nodes $N$ of $\mathcal{G}$, and iv) the AR order $M$.
We segment the synthetic data into training-validation-testing as $50\%$-$25\%$-$25\%$ of data portions, and the best model on the validation set is used to produce the results for final evaluation on the testing set.

Table \ref{Table1} shows the forecasting results in terms of relative Mean Squared Error (rMSE) of \method~against baseline methods.
\method~shows superior forecasting performance and running times over previous methods, highlighting the advantages of our model.
We also observe in Table \ref{Table1} that the forecasting performance of \method~is robust against different values of $K$, $N$, and $M$.
Besides, the running time scales well with an increasing number of nodes $N$, time steps $K$, and even the AR order $M$.
Finally, it is worth mentioning that TTS (considered the state-of-the-art method) fails to handle the case of limited data in almost any setting.

\begin{table*}[t]
\caption{The forecasting results in terms of rMSE on the synthetic spatiotemporal time series across different settings. The best-performing method on each case is shown in \textbf{bold}.}
\label{Table1}
\begin{center}
\resizebox{\textwidth}{!}{
\begin{tabular}{rcccccccccccccccr}
\toprule
 
\multirow{2}{*}{\textbf{Method}}  & \multicolumn{3}{c}{SNR} & \multicolumn{3}{c}{$K$} & \multicolumn{3}{c}{$N$} & \multicolumn{3}{c}{$M$} \\ 
\cmidrule(l){2-4} \cmidrule(l){5-7} \cmidrule(l){8-10} \cmidrule(l){11-13}

& $-10$ & $0$ & $10$ & $50$ & $100$ & $500$ & $100$ & $500$ & $1000$ & $3$ & $5$ & $7$  \\ \midrule

& \multicolumn{12}{c}{rMSE}\\ \midrule

DCRNN & $1.00$ & $0.740$ & $0.333$

& $0.796$ & $0.750$ &  $0.732$

& $0.990$ & $0.982$ & $0.986$

& $0.815$ & $0.814$ & $0.817$\\ 

TTS & $1.196$ & $0.957$ & $0.480$ &

$1.243$ & $0.923$ & $0.750$

& $1.205$ & $1.401$ & $1.130$ 

& $4.607$ & $2.461$ & $7.125$ \\ 

GCLSTM & $1.029$ & $0.748$ & $0.333$

& $0.809$ & $0.760$ & $0.733$ 

& $1.003$ & $0.984$ & $0.987$

& $0.816$ & $0.816$ & $0.823$\\ 






GConvGRU & $1.005$ & $0.740$ & $0.333$

& $0.797$ & $0.747$ & $0.732$

& $0.994$ & $0.982$ & $0.985$

& $0.815$ & $0.814$ & $0.816$\\ 

GConvLSTM & $1.034$ & $0.750$ &  $0.334$

& $0.822$ & $0.762$ & $0.734$ 

& $1.005$ & $0.984$ & $0.987$

& $0.816$ & $0.817$ & $0.822$\\ 






\method~(ours) & $\mathbf{0.955}$ & $\mathbf{0.704}$ & $\mathbf{0.302}$

& $\mathbf{0.717}$ & $\mathbf{0.710}$ & $\mathbf{0.709}$ 

& $\mathbf{0.951}$ & $\mathbf{0.955}$ & $\mathbf{0.952}$

& $\mathbf{0.708}$ & $\mathbf{0.707}$ & $\mathbf{0.707}$\\
\midrule

& \multicolumn{12}{c}{Runtime (seconds)}\\ 

\midrule

DCRNN & $88$ & $89$ & $89$ 

&  $89$ &  $96$ & $105$

& $45$ & $52$ & $73$

& $145$ & $146$ & $144$\\ 

TTS & $57$ & $59$ & $55$ &

$62$ & $63$ & $69$

& $28$ & $32$ & $47$

& $104$ & $121$ & $136$ \\ 

GCLSTM & $109$ & $109$ & $108$

& $120$ & $131$ & $152$ 

& $57$ & $79$ & $79$

& $178$ & $177$ & $173$\\ 






GConvGRU & $129$ & $128$ & $128$

& $148$ & $142$ & $153$

& $64$ & $73$ & $79$ 

& $179$ & $176$ & $175$\\ 

GConvLSTM & $171$ & $172$ & $172$

& $192$ & $196$ & $210$

& $86$ & $100$ & $109$ 

& $252$ & $250$ & $249$\\ 






\method~(ours) & $\mathbf{44}$ & $\mathbf{45}$ & $\mathbf{45}$

& $\mathbf{47}$ & $\mathbf{51}$ & $\mathbf{51}$

& $\mathbf{22}$ & $\mathbf{25}$ & $\mathbf{31}$

& $\mathbf{52}$ & $\mathbf{84}$ & $\mathbf{84}$

\\ 

\bottomrule
\end{tabular}
}
\end{center}
\end{table*}

\begin{table*}[t]
\caption{Forecasting comparison between \method~and previous methods in four real-world datasets in terms of the MSE metric, number of learnable parameters (\(|\Theta|\)), runtime ($t$) in minutes on $1000$ epochs (except LargeST which has $100$ epochs), and memory consumption in GB (Mem.). The best and second-best performing methods are shown in \textbf{bold} and \underline{underlined}, respectively.}
\label{Table2}
\begin{center}
\resizebox{0.96\textwidth}{!}{
\begin{tabular}{rcccccccccccccccr}
\toprule
\multirow{2}{*}{\textbf{Method}} & \multicolumn{4}{c}{\textsl{AirQuality}} & \multicolumn{4}{c}{\textsl{LargeST}} \\  \cmidrule(l){2-5} \cmidrule(l){6-9}

& MSE& \(|\Theta|\) & \(t\) & Mem. (\(\beta=0.2\)) & MSE & \(|\Theta|\) & \(t\) & Mem. (\(\beta=0.01\))\\ \midrule

TTS & $\mathbf{491.958}$ & $19.1$ K & $6.97$ & $3.03$&

$\mathbf{598.240}$& $35.1$ K  & $33.03$ & $11.90$\\

DCRNN & $530.825$ & $6.8$ K & $2.49$ & $0.86$

&  $660.086$ & $6.9$ K & $26.54$& $3.35$\\

A3TGCN & $517.683$ & $6.4$ K & $7.25$ & $3.53$

&  $732.923$ & $6.4$ K & $64.08$ & $13.86$\\

GCLSTM & $532.774$ & $4.7$ K & $4.10$& $0.75$

& $\underline{657.092}$ & $5.1$ K & $34.21$ & $2.92$\\

GConvGRU & $531.911$ & $3.5$ K & $2.62$& $0.66$

& $660.231$ & $3.8$ K & $22.93$ & $2.56$\\

GConvLSTM &  $530.077$ & $4.9$ K & $4.55$& $0.84$

& $658.892$ & $5.0$ K & $37.62$ & $3.28$\\

TGCN & $530.473$ & $6.6$ K & $7.43$ &  $1.16$

&  $679.385$ & $6.7$ K & $25.49$ & $4.53$\\ 

GGNM & $524.523$ & $4.1$ K & $60.20$ & $12.82$ 

& - & $69.4$ K & - & {OOM} \\ 

STCN & $512.471$ & $6.4$ K & $12.90$ & $0.68$ 

& $704.856$ & $6.4$ K & $53.47$ & $1.46$ \\ 

\method~(ours) & $\underline{511.092}$ & $\mathbf{12}$ & $\mathbf{0.92}$ & $\mathbf{0.05}$

& $657.387$ & $\mathbf{12}$ & $\mathbf{9.02}$ & $\mathbf{0.43}$ \\

\midrule
\midrule
\multirow{2}{*}{\textbf{Method}} & \multicolumn{4}{c}{\textsl{PeMS08}} & \multicolumn{4}{c}{\textsl{PemsBay}} \\  \cmidrule(l){2-5} \cmidrule(l){6-9}

& MSE& \(|\Theta|\) & \(t\) & Mem. (\(\beta=0.2\)) & MSE & \(|\Theta|\) & \(t\) & Mem. (\(\beta=0.1\))\\ \midrule

TTS & $\underline{468.200}$ & $14.8$ K & $4.52$ & $2.40$&

$\underline{2.905}$& $17.3$ K  & $36.04$ & $6.70$\\

DCRNN & $493.877$ & $6.8$ K & $3.08$ & $0.68$

&  $3.084$ & $6.8$ K & $18.40$& $1.89$\\

A3TGCN & $499.706$ & $6.4$ K & $7.95$ & $2.80$

&  $3.709$ & $6.4$ K & $47.79$ & $7.79$\\

GCLSTM & ${492.543}$ & $4.7$ K & $4.85$& $0.60$

& ${3.066}$ & $4.7$ K & $27.48$ & $1.65$\\

GConvGRU & $492.674$ & $3.5$ K & $2.94$& $0.51$

& $3.709$ & $3.5$ K & $19.51$ & $1.45$\\

GConvLSTM &  $493.266$ & $5.3$ K & $4.17$& $0.67$

& $3.068$ & $4.9$ K & $31.45$ & $1.85$\\

TGCN & $495.304$ & $6.6$ K & $3.41$ &  $0.92$

&  $3.082$ & $6.6$ K & $20.29$ & $2.55$\\ 

GGNM & $\textbf{434.634}$ & $1.9$ K & $40.80$ & $3.35$

& $\textbf{2.825}$ & $3.2$ K & $42.97$ & $8.56$ \\ 

STCN & $592.403$ &  $6.4$ K & $12.90$ & $0.46$ 

& $3.692$ & $6.4$ K & $13.87$ & $0.65$ \\ 

\method~(ours) & $499.851$ & $\mathbf{12}$ & $\mathbf{1.43}$ & $\mathbf{0.04}$

& $3.716$ & $\mathbf{12}$ & $\mathbf{8.07}$ & $\mathbf{0.09}$ \\

\bottomrule
\end{tabular}
}
\end{center}
\end{table*}

\textbf{Real spatiotemporal time series.}
We divide the data into the $60$\%-$20$\%-$20$\% segments for training-validation-testing.
We compare \method~with previous methods in four real datasets\footnote{Appendix \ref{AddReal} presents additional results and analysis on real datasets.}:

\textsl{AirQuality} \cite{zheng2015forecasting}: Recordings of PM 2.5 pollutant measurements collected from $437$ air quality monitoring stations across $43$ cities in China between May 2014 to April 2015.

\textsl{LargeST} \cite{liu2023largest}: Large-scale traffic forecasting dataset that contains $5$-minute traffic reading metrics recorded in a $5$-year interval from 01/01/2017 to 12/31/2021 on $8600$ traffic sensors across California. 

\textsl{PeMS08} \cite{guo2021learning}: $5$-minute traffic readings metrics for a $2$-month interval from 07/01/2016 to 08/31/2016 recorded by $170$ traffic sensors in San Bernardino.

\textsl{PEMS-BAY} \cite{li2018diffusion}: $5$-minute traffic reading metrics for a $6$-month interval from 01/01/2017 to 05/31/2017 recorded on $25$ sensors across the San Francisco Bay Area.

Table \ref{Table2} shows \method's forecasting performance compared to previous methods using the MSE metric, alongside the number of learnable parameters \(|\Theta|\), running times $t$ in minutes, and the total GPU memory consumption in Gigabytes (GB) on the \(\beta K_{test}\) test samples, where \(K_{test}\) denotes the number of test samples.
Overall, \method~demonstrates competitive MSE performance compared to previous state-of-the-art methods, while dramatically reducing memory consumption and achieving faster running times.
For instance, on the large-scale dataset \textsl{LargeST}, \method~requires approximately $28$ and $7$ times less memory than TTS and GCLSTM, respectively, while exhibiting similar MSE performance to GCLSTM.
Similarly, \method~is approximately $3.6$ times faster than TTS, and $3.8$ times faster than GCLSTM.
These results highlight the importance of \method~in the current landscape of frugality in machine learning \cite{schwartz2020green}.
Similar observations hold for the \textsl{AirQuality} dataset, where \method~ranks as the second-best method in terms of MSE.
Comparable improvements in memory consumption and running times are also evident in the \textsl{PeMS08} and \textsl{PemsBay} datasets.


We conduct an additional experiment in the \textsl{AirQuality} dataset where we vary the percentage of training data used to train some models.
We report results for A3TGCN, TTS, and \method, as illustrated in Figure \ref{fig:LimitedData}.
The results show three regimes, i) low-data regime (less than $40\%$ of training data), ii) transition phase (between $40\%$ and $60\%$), and iii) abundant training data (more than $60\%$).
Methods with low or medium-sized parameter budgets like A3TGCN and \method~perform the best in the low-data regime.
Subsequently, we see a transition phase where TTS, with a high parameter budget, becomes the best method when abundant training data is available.
Figure \ref{fig:LimitedData} gives important hints as to whether to use \method~or TTS regarding the specific scenarios of some potential real-world problem in spatiotemporal forecasting.
Thus, from Table \ref{Table2} and Figure \ref{fig:LimitedData} we argue that \method~is an important alternative to other spatiotemporal methods when we have applications with limited data and resources.

        










\textbf{Experimental stability analysis.}
We conduct a comprehensive experiment to validate the theoretical stability analysis of \method~as detailed in Section \ref{sec:stability_analysis}.
We generate Erd\H{o}s-R\'enyi graphs with \(N=100\) and different values of sparsity \(p\in\{0.1,0.3,\dots,0.9\}\) for the adjacency matrix \(\mathbf{A}\) and the graph filter coefficients \(\boldsymbol{\theta}\), where higher $p$ indicates less sparsity.
We fix $M=10$ for the graph filters in all experiments.
Therefore, we perturb the input graph with a Gaussian-generated perturbation matrix \(\mathbf{E}\) in (\ref{pert_model}) with varying SNRs in \(\{-15,0,15\}\) dB, corresponding to the upper bound \(\delta_{\mathbf{A}}\) in (\ref{pert_model}).
Figure \ref{StabFig} illustrates the averaged rMSE over five independent realizations per setting, indicating that forecasting performance becomes more stable with increasing SNR values (decreasing \(\delta_{\mathbf{A}}\)).
Similarly, we observe that \method~performs better with higher sparsity values.
Consequently, even in cases of low SNR (high noise), \method~can leverage the sparsity of the underlying process to its advantage.



\begin{figure}
    \centering
    \begin{minipage}{.45\textwidth}
      \centering
      \includegraphics[width=.85\textwidth]{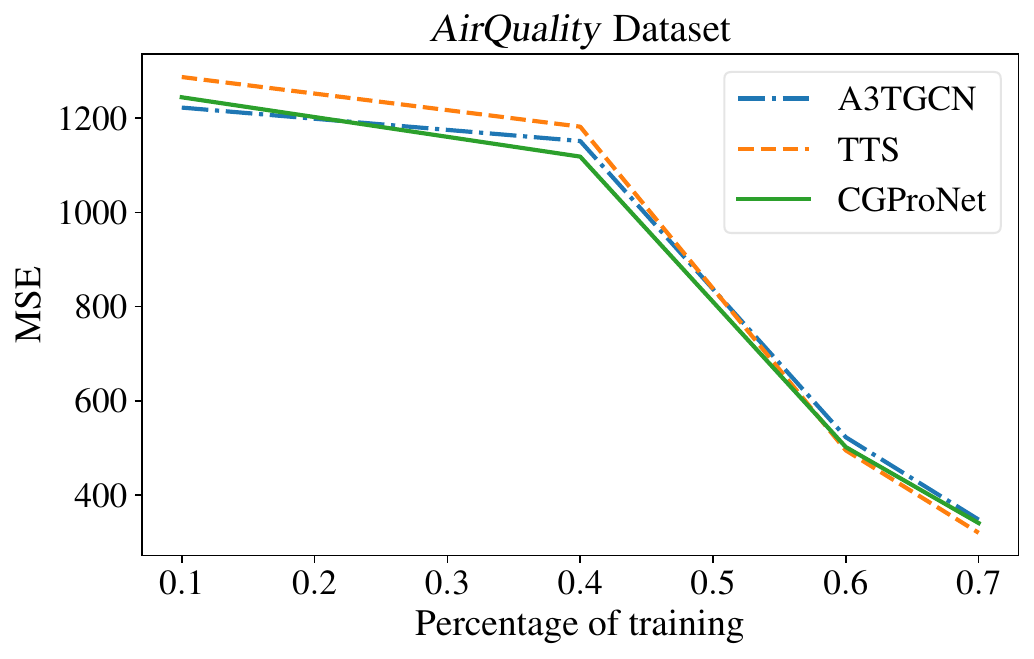}
      \caption{Forecasting performance comparison in the case of limited training data.}
      \label{fig:LimitedData}
    \end{minipage}
    \hfill
    \begin{minipage}{.45\textwidth}
      \centering
      \includegraphics[width=.85\textwidth]{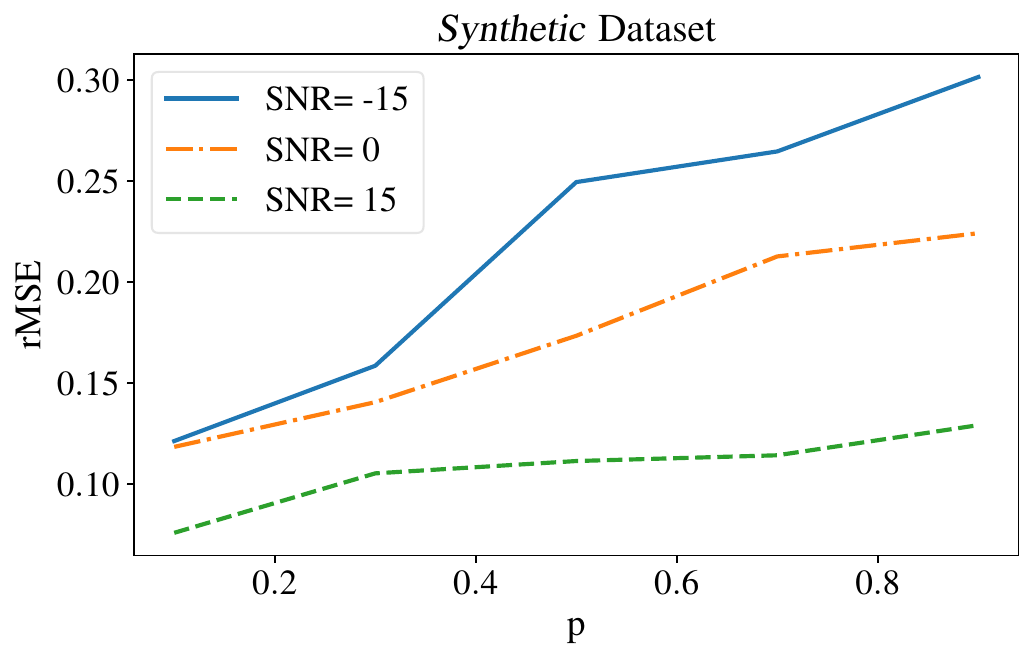}
      \caption{The averaged rMSE vs. sparsity $p$ with different SNRs for the synthetic data.}
      \label{StabFig} 
    \end{minipage}
\end{figure}

\begin{table*}
    \caption{\color{black} Multiple horizons forecasting comparison between different extensions of \method~and the TTS method on the \textsl{AirQuality} dataset.}
    \label{Table_Multi_H}
    \centering
    \resizebox{\textwidth}{!}{
    \begin{tabular}{rcccccccccccc}
        \toprule
        \multirow{2}{*}{\textbf{Method}} & \multicolumn{4}{c}{$H=3$} & \multicolumn{4}{c}{$H=6$} & \multicolumn{4}{c}{$H=9$} \\
        \cmidrule(l){2-5} \cmidrule(l){6-9} \cmidrule(l){10-13}
        & MSE & $|\Theta|$ & $t$ & Mem. & MSE & $|\Theta|$ & $t$ & Mem. & MSE & $|\Theta|$ & $t$ & Mem. \\
        
         \midrule

TTS&$\mathbf{803.452}$&$19.2$ K&$13.50$&$3.03$ &$1142.317$&$19.4$ K&$13.29$&$3.04$ 
&${1344.903}$&$19.5$ K& $13.10$&$3.05$\\

\method$_\text{MLP}$&$1090.712$&$\mathbf{15}$&$\mathbf{0.76}$ & $\mathbf{0.05}$ &$1306.661$&$\mathbf{18}$&$\mathbf{0.88}$&$\mathbf{0.06}$ &$1469.130$& $\mathbf{21}$ &$\mathbf{1.04}$&$\mathbf{0.07}$\\

\method$_\text{Ada}$&${804.710}$& $30$ & $1.49$ &$0.13$ &$\mathbf{1132.528}$&$57$&$2.70$&${0.25}$ &$\mathbf{1336.466}$&$84$&${3.82}$&${0.37}$\\
\bottomrule
    \end{tabular}
    }
\end{table*}

{
\color{black}

\textbf{Multiple horizons forecasting.}
Table \ref{Table_Multi_H} shows the results of multiple horizons forecasting for $H=\{3,6,9\}$ of different extensions of \method~and the TTS method on the \textsl{AirQuality} dataset.
We observe that \method$_\text{MLP}$ shows high efficiency in terms of memory and computational complexity for all horizons, although with reduced forecasting performance.
On the other hand, \method$_\text{Ada}$ shows high forecasting accuracy, outperforming TTS for $H=\{6,9\}$.
The high precision of \method$_\text{Ada}$ comes at a slightly higher computational cost due to the recursion in \eqref{eqn:CGProNet_Adaptive}.

}





\begin{wraptable}{r}{0.43\textwidth}
    \vspace{-10pt}
    \caption{Ablation study of the choice of regularization and non-linearity in \method.}
    \label{tab:ablation_study}
    \centering
    \resizebox{0.43\textwidth}{!}{
    \begin{tabular}{ccccc}
        \toprule
        $P(\mathbf{A}, \boldsymbol{\theta}_i)$ & $\Vert \boldsymbol{\theta} \Vert_1$ & $\Vert \boldsymbol{\theta} \Vert_2$ & $\sigma(\cdot)$ & MSE \\
         \midrule
         \ding{55} & \ding{55} & \ding{55} & \ding{55} &  $1.067$\\
         \ding{51} & \ding{55} & \ding{55} & \ding{55} &  $0.967$\\
         \ding{51} & \ding{51} & \ding{55} & \ding{55} & $0.851$\\
         \ding{51} & \ding{55} & \ding{51} & \ding{55} & $0.849$\\
         \ding{51} & \ding{55} & \ding{55} & \ding{51} & $0.726$\\
         \ding{51} & \ding{55} & \ding{51} & \ding{51} & $0.720$\\
         \ding{51} & \ding{51} & \ding{55} & \ding{51} & $\mathbf{0.716}$\\
         \bottomrule
    \end{tabular}
    }
\end{wraptable} 

\textbf{Ablation study.}
We conduct an ablation study to analyze several aspects of the loss function and the architecture of \method.
We generate ten realizations of the synthetic data with \(N=30\), $\text{SNR}=0$, and \(p=0.03\).
We analyse \method~with $\ell_1$ $(\Vert \boldsymbol{\theta} \Vert_1)$ or $\ell_2$ $(\Vert \boldsymbol{\theta} \Vert_2)$ regularization into our loss function $\mathcal{L}$.
Similarly, we conduct experiments i) using $\tanh(\cdot)$ as non-linearity or a linear function in our aggregation scheme, and ii) using graph filters $P(\mathbf{A}, \boldsymbol{\theta}_i)$ or unconstrained matrix coefficients.
Table \ref{tab:ablation_study} shows the average MSE over seven different settings.
The VAR model (first row), which uses unconstrained matrix coefficients, performs poorly in forecasting due to its lack of non-linearities, higher parameter count, and absence of relational inductive bias.
We observe that adding either $\ell_1$ or $\ell_2$ regularization improves the results of \method~independently of the activation function $\sigma(\cdot)$ used.
We also observe that $\ell_1$ is superior to $\ell_2$ regularization in any case, highlighting the results from the stability analysis for high sparsity of the underlying process.
Finally, combining non-linearity with $\ell_1$ regularization leads to the best results, highlighting its importance in \method.



\section{Conclusion {\color{black} and Limitations}}
\label{sec:conclusions}

In this paper, we introduced a non-linear framework for spatiotemporal forecasting using concepts of CGPs, named Causal Graph Process Neural Network (\method) model.
Our model dramatically reduces the computational and memory complexity concerning the state-of-the-art GNN-based models.
To that end, \method~relies on higher-order graph filters, optimizing the model with fewer parameters, reducing memory usage, and enhancing runtime.
We also comprehensively studied the theoretical stability properties of \method.
Extensive experimental results on synthetic and real-world spatiotemporal graph signals validated the potential impact of \method~in the current landscape of spatiotemporal forecasting. 
In future work, we will explore adapting a broader class of graph filters to make the proposed framework more general.


\section{Acknowledgement}
This work was supported by the center Hi! PARIS and ANR (French National Research Agency) under the JCJC project GraphIA (ANR-20-CE23-0009-01).

\bibliographystyle{ieeetr}
\bibliography{main}


\newpage

\appendix



\newpage
\section*{Appendix}

The appendix is organized as follows: Section \ref{app:VAR_CGP} discusses the cases in the proposed framework that can coincide with the classic VAR and CGP models.
The proofs of the stated propositions and theorems in the main body of the paper are provided by Section \ref{app:proofs}. Further theoretical and experimental discussions about the possible extension of the proposed framework to the case of using continuous (heat kernel) polynomial graph filters are outlined in Section \ref{C2GProNet_det}.
\textcolor{black}{Section \ref{app:additional_multiH_forecasting} presents one additional extension of \method~to forecast multiple horizons with some results.}
The details about the generation of synthetic spatiotemporal time series, and additional results on the other kinds of graph structures (\ie Directed Stochastic Block Models (DSBMs) can be found in Section \ref{app:synthetic_dataset}. 
Finally, Section \ref{AddReal} describes the details about the hyperparameters of the real-world experiments and additional results on real-world datasets.

\section{Recovering VAR and CGP}
\label{app:VAR_CGP}

\subsection{Vector Auto-Regressive (VAR) process}

By considering $\{\alpha_i=1\}_{i=1}^M$, $\sigma(\cdot)$ as a linear function, and also $\texttt{GF}_{\mathcal{G},\boldsymbol{\phi}_i}=\mathbf{R}_i\in\mathbb{R}^{N\times N}$, where \(\{\mathbf{R}_i\}_{i=1}^M\) are unconstrained coefficient matrices, the proposed framework coincides with the well-known VAR model. In the VAR process, the causal dependencies between different time samples with an \(M\)-sample window can be defined using coefficient matrices \(\{\mathbf{R}_i\}_{i=1}^M\) as follows:
\begin{equation}
\label{VAR}
\mathbf{x}_{k} = \sum_{i=1}^{M}{\mathbf{R}_{i}\mathbf{x}_{k-i}}+\mathbf{w}_{k}\text{,}
\end{equation}
with \(\mathbf{w}_{k}\) being the instantaneous exogenous noise.
In addition to the numerous advantages of this model, there are some serious limitations to using this type of modeling in real-world problems. For example, the number of learnable parameters is \(MN^2\) which makes it prohibitive to be used in the case of large network size involved or if the AR order \(M\) is considerably large. 

\subsection{Causal Graph Process (CGP)}

The choices of $\{\alpha_i=1\}_{i=1}^M$, $\sigma(\cdot)$ as a linear function, and 
\begin{equation}
\label{Def_GPi2}
\texttt{GF}_{\mathcal{G},\boldsymbol{\phi}_i}=\sum_{j=0}^{i}{\phi_{ij}\mathbf{A}^j},
\end{equation}
leads to the definition of the CGP model by \cite{mei2016signal} which was proposed to address the issues with the VAR model (\ref{VAR}), to benefit from the GSP interpretations and also significantly reduce the number of learnable parameters. The model is developed by replacing the unconstrained VAR coefficient matrices \(\{\mathbf{R}_i\}_{i=1}^M\) with the polynomial graph filters.

\section{Proofs}
\label{app:proofs}
\subsection{Proof of Proposition \ref{prop1}}
\label{Proof_prop1}

\begin{proof}
By inserting the perturbation models into the formation of \(P(\hat{\mathbf{A}},\hat{\boldsymbol{\theta}}_i)\), we obtain:
\begin{equation}
P(\hat{\mathbf{A}},\hat{\boldsymbol{\theta}}_i)=\sum_{j=0}^{i}{\hat{\theta}_{ij}\hat{\mathbf{A}}^j}=\sum_{j=0}^{i}{\theta_{ij}\hat{\mathbf{A}}^j} + \sum_{j=0}^{i}{z_{ij}\hat{\mathbf{A}}^j}.
\end{equation}
Next, one can write:
\begin{equation}
\begin{split}
&P(\hat{\mathbf{A}},\hat{\boldsymbol{\theta}}_i)\le\sum_{j=0}^{i}{|\theta_{ij}|\|\hat{\mathbf{A}}^j\|_2} + \sum_{j=0}^{i}{|z_{ij}|\|\hat{\mathbf{A}}^j\|_2}.
\end{split}
\end{equation}

The next lemma helps to find an upper bound on \(\|\hat{\mathbf{A}}^j\|_2\).
\begin{lemma}
\label{lemma2} 
With the assumptions and definitions of Proposition \ref{prop1}, it holds that:
\begin{equation}
\label{Aj2}
\begin{split}
\|\hat{\mathbf{A}}^j\|_2\le 
L +\delta_\mathbf{A}\hat{L}_M(\delta_\mathbf{A}).
\end{split}
\end{equation}
\end{lemma}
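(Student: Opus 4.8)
The plan is to isolate the unperturbed power $\mathbf{A}^j$ so that the tight bound $\|\mathbf{A}^j\|_2 \le L$ from the power assumption can be applied to it, and then to control only the remaining perturbation terms by a quantity matching $\delta_\mathbf{A}\hat{L}_M(\delta_\mathbf{A})$. First I would split, via the triangle inequality, $\|\hat{\mathbf{A}}^j\|_2 = \|\mathbf{A}^j + (\hat{\mathbf{A}}^j - \mathbf{A}^j)\|_2 \le \|\mathbf{A}^j\|_2 + \|\hat{\mathbf{A}}^j - \mathbf{A}^j\|_2$, and bound the first term by $L$ directly using $\max_{1\le i\le M}\|\mathbf{A}^i\|_2 \le L$. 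It then remains to show $\|\hat{\mathbf{A}}^j - \mathbf{A}^j\|_2 \le \delta_\mathbf{A}\hat{L}_M(\delta_\mathbf{A})$.

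For that difference, the key step is the telescoping (non-commutative difference-of-powers) identity $\hat{\mathbf{A}}^j - \mathbf{A}^j = \sum_{l=0}^{j-1}\hat{\mathbf{A}}^{\,l}\,\mathbf{E}\,\mathbf{A}^{\,j-1-l}$, which holds with $\mathbf{E} = \hat{\mathbf{A}} - \mathbf{A}$ whether or not $\mathbf{A}$ and $\mathbf{E}$ commute. Applying submultiplicativity of the spectral norm together with $\|\mathbf{E}\|_2 \le \delta_\mathbf{A}$, $\|\mathbf{A}\|_2 = \|\mathbf{A}^1\|_2 \le L$ (the $i=1$ instance of the power bound), and $\|\hat{\mathbf{A}}\|_2 \le \|\mathbf{A}\|_2 + \|\mathbf{E}\|_2 \le L + \delta_\mathbf{A}$, I would obtain $\|\hat{\mathbf{A}}^j - \mathbf{A}^j\|_2 \le \delta_\mathbf{A}\sum_{l=0}^{j-1}(L+\delta_\mathbf{A})^l L^{\,j-1-l}$.

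Next I would recognize the geometric sum through the scalar factorization $a^j - b^j = (a-b)\sum_{l=0}^{j-1}a^l b^{\,j-1-l}$ with $a = L+\delta_\mathbf{A}$ and $b = L$, so that $\delta_\mathbf{A}\sum_{l=0}^{j-1}(L+\delta_\mathbf{A})^l L^{\,j-1-l} = (L+\delta_\mathbf{A})^j - L^j$. Since the powers appearing in $P(\hat{\mathbf{A}},\hat{\boldsymbol{\theta}}_i)$ satisfy $1 \le j \le i \le M$, this expression is one of the terms in the maximum defining $\hat{L}_M$, whence $(L+\delta_\mathbf{A})^j - L^j = \delta_\mathbf{A}\cdot\tfrac{(L+\delta_\mathbf{A})^j - L^j}{\delta_\mathbf{A}} \le \delta_\mathbf{A}\max_{1\le i\le M}\tfrac{(L+\delta_\mathbf{A})^i - L^i}{\delta_\mathbf{A}} = \delta_\mathbf{A}\hat{L}_M(\delta_\mathbf{A})$. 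Combining this with the first step yields the claimed bound (the trivial $j=0$ term contributes only $\|\mathbf{I}\|_2 = 1 \le L$).

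I expect the main obstacle to be the non-commutativity of $\mathbf{A}$ and $\mathbf{E}$: one cannot simply invoke the scalar binomial theorem, so the telescoping identity (or, equivalently, a term-by-term bound over the $\binom{j}{m}$ non-commutative words containing exactly $m$ copies of $\mathbf{E}$, each bounded by $L^{\,j-m}\delta_\mathbf{A}^m$) is what makes the estimate rigorous while still collapsing to the clean form $(L+\delta_\mathbf{A})^j - L^j$. A subtlety worth flagging is that the sharper bound $\|\mathbf{A}^j\|_2 \le L$ must be reserved for the single pure-$\mathbf{A}$ term, whereas the weaker submultiplicative bound $\|\mathbf{A}\|_2 \le L$ is used inside the perturbation sum; this asymmetric split is precisely what produces the additive form $L + \delta_\mathbf{A}\hat{L}_M(\delta_\mathbf{A})$ rather than the looser $(L+\delta_\mathbf{A})^j$.
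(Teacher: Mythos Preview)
Your proposal is correct and follows essentially the same route as the paper: the triangle-inequality split $\|\hat{\mathbf{A}}^j\|_2 \le \|\mathbf{A}^j\|_2 + \|\hat{\mathbf{A}}^j-\mathbf{A}^j\|_2$, followed by the bound $\|\hat{\mathbf{A}}^j-\mathbf{A}^j\|_2 \le \delta_\mathbf{A}\hat{L}_M(\delta_\mathbf{A})$. The only difference is that the paper packages the latter bound as a separate Lemma cited from \cite{mei2016signal}, whereas you supply a self-contained derivation via the telescoping identity $\hat{\mathbf{A}}^j-\mathbf{A}^j=\sum_{l=0}^{j-1}\hat{\mathbf{A}}^{l}\mathbf{E}\mathbf{A}^{j-1-l}$ and the closed-form geometric sum; this makes your argument more explicit but mathematically identical in structure.
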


\begin{proof}
Using the triangularity principle for inequalities, one can write:
\begin{equation}
\label{Aj}
\begin{split}
\|\hat{\mathbf{A}}^j\|_2\le \left\|\mathbf{A}^j + \left(\hat{\mathbf{A}}^j-\mathbf{A}^j\right)\right\|_2&\le\|\mathbf{A}^j\|_2+\|\hat{\mathbf{A}}^j-\mathbf{A}^j\|_2.
\end{split}
\end{equation}
Then, the next lemma helps to write the upper bound of the equality (\ref{Aj}) in terms of the stated definitions.

\begin{lemma}{Eq. (35) in \cite{mei2016signal}.}
\label{lemma1}
With the assumptions and definitions of Proposition \ref{prop1}, it holds that:
\begin{equation}
\max_{1\le i \le M}{\|\hat{\mathbf{A}}^i-\mathbf{A}^i\|_2}\le \delta_\mathbf{A}\hat{L}_M(\delta_\mathbf{A}).
\end{equation}
\end{lemma}

Therefore, using Lemma \ref{lemma1}, the \(\|\hat{\mathbf{A}}^j\|_2\) in (\ref{Aj}) is upper-bounded as:
\begin{equation}
\label{Aj3}
\|\hat{\mathbf{A}}^j\|_2\le 
L +\delta_\mathbf{A}\hat{L}_M(\delta_\mathbf{A}).
\end{equation}
\end{proof}
Finally, using Lemma \ref{lemma2}, the desired bound stated in Proposition \ref{prop1} is obtained as:
\begin{equation}
\begin{split}
&\left\|P(\hat{\mathbf{A}},\hat{\boldsymbol{\theta}}_i)\right\|_2\le (\rho_{\boldsymbol{\theta}_i}+\delta_{\boldsymbol{\theta}_i})(L+\delta_{\mathbf{A}}\hat{L}_M(\delta_\mathbf{A})).
\end{split}
\end{equation}
\end{proof}

\subsection{Proof of Proposition \ref{prop2}}
\label{Proof_prop2}
\begin{proof}
By plugging the definition (\ref{Def_GPi}) into the left part of (\ref{GF_stab}), one can write:
\begin{equation}
\resizebox{\hsize}{!}{$\left\|P(\hat{\mathbf{A}},\hat{\boldsymbol{\theta}}_i)-P(\mathbf{A},\boldsymbol{\theta}_i)\right\|_2=\left\|\sum_{j=0}^{i}{\hat{\theta}_{ij}\hat{\mathbf{A}}^j}-\sum_{j=0}^{i}{\theta_{ij}\mathbf{A}^j}\right\|_2\le\sum_{j=0}^{i}{|\theta_{ij}|\|\hat{\mathbf{A}}^j-\mathbf{A}^j\|_2}+\sum_{j=0}^{i}{|z_{ij}|\|\hat{\mathbf{A}}^j\|_2},$}
\end{equation}
where the triangularity principle in inequalities was used. Then, using Lemmas (\ref{lemma2}) and (\ref{lemma1}):
\begin{equation}
\begin{split}
\left\|P(\hat{\mathbf{A}},\hat{\boldsymbol{\theta}}_i)-P(\mathbf{A},\boldsymbol{\theta}_i)\right\|_2&\le \|\boldsymbol{\theta}_i\|_1\delta_{\mathbf{A}}\hat{L}_M(\delta_\mathbf{A})+\|\boldsymbol{\theta}_i-\hat{\boldsymbol{\theta}}_i\|_1(L+\delta_{\mathbf{A}}\hat{L}_M(\delta_\mathbf{A}))\\
&\le \rho_{\boldsymbol{\theta}_i}\delta_{\mathbf{A}}\hat{L}_M(\delta_\mathbf{A}) + \delta_{\boldsymbol{\theta}_i}(L+\delta_{\mathbf{A}}\hat{L}_M(\delta_\mathbf{A})),
\end{split}
\end{equation}
which gets the desired results stated in Proposition \ref{prop2}.
\end{proof}

\subsection{Proof of Theorem \ref{thm1}}
\label{proof_thm1}
\begin{proof}
By plugging the prediction model \eqref{eqn:CGProNet_GNN} into the prediction difference, one can write the prediction difference as:
\begin{equation}
\begin{split}
&\|\tilde{\mathbf{x}}_{k}-\mathbf{x}_{k}\|_2=\|\sum_{i=1}^M{\hat{\alpha}_i \sigma \left(P(\hat{\mathbf{A}},\hat{\boldsymbol{\theta}}_i)\mathbf{x}(k-i)\right)-\alpha_i \sigma \left(P(\mathbf{A},\boldsymbol{\theta}_i)\mathbf{x}(k-i)\right)}\|_2\\
&\resizebox{\hsize}{!}{$ \le\|\sum_{i=1}^M{\alpha_i \left(\sigma (P(\hat{\mathbf{A}},\hat{\boldsymbol{\theta}}_i)\mathbf{x}(k-i))-\sigma \left(P(\mathbf{A},\boldsymbol{\theta}_i)\mathbf{x}(k-i)\right)\right)}\|_2+\|\sum_{i=1}^M{e_i \left(\sigma(P(\hat{\mathbf{A}},\hat{\boldsymbol{\theta}}_i)\mathbf{x}(k-i))\right)}\|_2.$}
\end{split}
\end{equation}
Next, due to \(\sigma(0)=0\) and \(\sigma(\cdot)\) being Lipschitz for \(\forall i\), we have:
\begin{equation}
\|\sigma(\mathbf{x})\|_2\le\|\mathbf{x}\|_2;\: \forall~i,\: \text{and for any}\: \mathbf{x}\text{.}
\end{equation}
Therefore,
\begin{equation}
\begin{split}
&\|\sum_{i=1}^M{\alpha_i \left(\sigma(P(\hat{\mathbf{A}},\hat{\boldsymbol{\theta}}_i)\mathbf{x}(k-i))-\sigma\left(P(\mathbf{A},\boldsymbol{\theta}_i)\mathbf{x}(k-i)\right)\right)}\|_2+\|\sum_{i=1}^M{e_i \left(\sigma(P(\hat{\mathbf{A}},\hat{\boldsymbol{\theta}}_i)\mathbf{x}(k-i))\right)}\|_2\\
&\le\sum_{i=1}^{M}{|\alpha_i|\|P(\hat{\mathbf{A}},\hat{\boldsymbol{\theta}}_i)-P(\mathbf{A},\boldsymbol{\theta}_i)\|_2 \|\mathbf{x}(k-i)\|_2}+\sum_{i=1}^{M}{|e_i|\|P(\hat{\mathbf{A}},\hat{\boldsymbol{\theta}}_i)\|_2 \|\mathbf{x}(k-i)\|_2}\\
&\resizebox{\hsize}{!}{$\le\sum_{i=1}^{M}{|\alpha_i|\left(\rho_{\boldsymbol{\theta}_i}\delta_{\mathbf{A}}\hat{L}_M(\delta_\mathbf{A}) + \delta_{\boldsymbol{\theta}_i}(L+\delta_{\mathbf{A}}\hat{L}_M(\delta_\mathbf{A}))\right) \|\mathbf{X}\|_{1,2}}+\sum_{i=1}^{M}{|e_i|(\rho_{\boldsymbol{\theta}_i}+\delta_{\boldsymbol{\theta}_i})(L+\delta_{\mathbf{A}}\hat{L}_M(\delta_\mathbf{A})) \|\mathbf{X}\|_{1,2}}$}\\
&\le\rho_{\boldsymbol{\alpha}}\left(\rho_{\boldsymbol{\theta}}\delta_{\mathbf{A}}\hat{L}_M(\delta_\mathbf{A}) + \delta_{\boldsymbol{\theta}}(L+\delta_{\mathbf{A}}\hat{L}_M(\delta_\mathbf{A}))\right) \|\mathbf{X}\|_{1,2}+\delta_{\boldsymbol{\alpha}}(\rho_{\boldsymbol{\theta}}+\delta_{\boldsymbol{\theta}})(L+\delta_{\mathbf{A}}\hat{L}_M(\delta_\mathbf{A})) \|\mathbf{X}\|_{1,2},
\end{split}
\end{equation}
where the second inequality relies on the fact that \(\forall~i:\|\mathbf{x}(i)\|_2\le\|\mathbf{X}\|_{1,2}\), and this concludes the proof.
\end{proof}

\section{Continuous \method~(\cmethod)}
\label{C2GProNet_det}


The number of parameters in \method~could be reduced by adopting \textit{continuous} polynomial graph filters.
More precisely, we can use the heat kernels as follows:
\begin{equation}
\begin{split}
\texttt{GF}_{\mathcal{G},\boldsymbol{\theta}_i}=\theta_{i1}\exp\left( \theta_{i2}\mathbf{A} \right)=\sum_{j=0}^{\infty}{\left(\frac{\theta_{i1}(\theta_{i2})^j}{j!}\right)\mathbf{A}^j},
\end{split}
\end{equation}
where \( \boldsymbol{\theta}_i=[\theta_{i1}, \theta_{i2}]^\top\in\mathbb{R}^{2} \) are learnable parameters.
Therefore, the GNN model is given by:
\begin{equation}
    \label{eqn:C2GProNet}
    \tilde{\mathbf{x}}_k = \sum_{i=1}^M{\alpha_i  \tanh\left( \theta_{i1}\exp\left( \theta_{i2}\mathbf{A} \right)  \mathbf{x}_{k-i}  \right)}.
\end{equation}
Notice that the number of learnable parameters in \eqref{eqn:C2GProNet} is given by \(3M\), which is a significant reduction concerning the non-continuous alternative.
However, we need the eigenvalue decomposition of \(\mathbf{A}\) to compute $\exp\left( \theta_{i2}\mathbf{A} \right)$ in \eqref{eqn:C2GProNet}, which has a computational complexity of \(\mathcal{O}(N^3)\).
Thus, the continuous \method~decreases memory footprint at the expense of increased computational complexity.

It is worth noting that the continuous \method~can propagate global and local information within each time step with flexible and learnable parameters.
Therefore, the efficient receptive field on the neighbor nodes is automatically optimized through the training process, alleviating the over-smoothing and over-squashing issues \cite{sharp2022diffusionnet,behmanesh2023tide,giraldo2023trade}.
Further theoretical and experimental analysis of the continuous \method~model can be found in Appendix \ref{C2GProNet_det}.

From another point of view, it seems that we can extend the current approach to the continuous space by replacing the discrete graph filters with heat kernels as follows:
\begin{equation}
\label{CGP_GNN2}
\begin{split}
\mathbf{x}_{k}&=\sum_{i=1}^M{\alpha_i \sigma\left(c_i \Psi(\mathbf{A},t_i)\mathbf{x}(k-i)\right)} + \mathbf{w}(k),\\
\end{split}
\end{equation}
where
\begin{equation}
\label{heat_ker}
\Psi(\mathbf{A},t_i)=e^{\mathbf{A}t_i}=\sum_{j=0}^{\infty}{\frac{(t_i\mathbf{A})^j}{j!}}\text{,}
\end{equation}
which \(\{t_i\}_{i=1}^M\) are learnable and \(\{t_i\}_{i=1}^M\) are the learnable graph filter coefficients. Using this approach, both the global and local information can be propagated within each time step with flexible and learnable parameters \(\{t_i\}_{i=1}^M\) \cite{sharp2022diffusionnet,behmanesh2023tide}.

The main challenge here is the need for performing an EVD on \(\mathbf{A}\) (\(\mathcal{O}(N^3)\)) that can be precomputed only once as a preprocessing step. Therefore, exploiting the proposed framework (\ref{CGP_GNN2}) reduces the number of learnable parameters to \(3M\) with the expense of increasing the computational complexity, especially for large graphs.

The rigorous investigation of such extensions is targeted in our future work.

\subsection{Stability Analysis of \cmethod}
\label{app:stability_C2GProNet}

The next lemma provides the upper bound for the difference between true and perturbed heat graph filters:

\begin{lemma}
\label{Lemma_app}
With the perturbation model (\ref{pert_model}), for any \(i=1,\hdots,M\), the stability of the heat kernel (\ref{heat_ker}) can be stated as:
\begin{equation}
\|\Psi(\hat{\mathbf{A}},t_i)-\Psi(\mathbf{A},t_i)\|\le\left(t_i \|\mathbf{E}\| e^{\left(\mu(\mathbf{A})-\alpha(\mathbf{A})+\|\mathbf{A}\|+\|\mathbf{E}\|\right)t_i}\right),
\end{equation}
where \(\|\cdot \|\) states the spectral norm, \(\lambda(\mathbf{A})\) is the set of eigenvalues of \(\mathbf{A}\), \(Re(\cdot)\) obtained the real part of a complex argument, and
\begin{equation}
\label{heat_stab}
\begin{split}
&\Psi(\mathbf{A},t)=e^{\mathbf{A}t}\\
&\alpha(\mathbf{A})=\max\left\{\text{Re}(\lambda)|\lambda\in\lambda(\mathbf{A})\right\}\\
&\mu(\mathbf{A})=\left\{\mu|\mu\in\lambda\left(\frac{\mathbf{A}+\mathbf{A}^\top}{2}\right)\right\}.\\
\end{split} 
\end{equation}
\end{lemma}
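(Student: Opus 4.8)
The plan is to treat $\Psi(\mathbf{A},t)=e^{\mathbf{A}t}$ and its perturbed counterpart $\Psi(\hat{\mathbf{A}},t)=e^{\hat{\mathbf{A}}t}$ as solutions of the linear matrix ODEs $\dot{\mathbf{Y}}=\mathbf{A}\mathbf{Y}$ and $\dot{\mathbf{Y}}=\hat{\mathbf{A}}\mathbf{Y}$, and to control their difference through a variation-of-parameters (Duhamel) identity. First I would set $\boldsymbol{\Phi}(t):=e^{\hat{\mathbf{A}}t}-e^{\mathbf{A}t}$, note $\boldsymbol{\Phi}(0)=\mathbf{0}$, and differentiate using $\hat{\mathbf{A}}=\mathbf{A}+\mathbf{E}$ from \eqref{pert_model} to obtain $\dot{\boldsymbol{\Phi}}(t)=\mathbf{A}\boldsymbol{\Phi}(t)+\mathbf{E}\,e^{\hat{\mathbf{A}}t}$. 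Solving this inhomogeneous equation with the integrating factor $e^{-\mathbf{A}t}$ yields the integral representation
\begin{equation}
\Psi(\hat{\mathbf{A}},t)-\Psi(\mathbf{A},t)=\int_0^t e^{\mathbf{A}(t-s)}\,\mathbf{E}\,e^{\hat{\mathbf{A}}s}\,ds,
\end{equation}
which isolates the first-order dependence on the perturbation $\mathbf{E}$.

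Next I would pass to spectral norms, using submultiplicativity and the triangle inequality for integrals to get $\|\boldsymbol{\Phi}(t)\|\le \|\mathbf{E}\|\int_0^t \|e^{\mathbf{A}(t-s)}\|\,\|e^{\hat{\mathbf{A}}s}\|\,ds$. The crucial analytic ingredient is the matrix-measure (logarithmic-norm) bound $\|e^{\mathbf{M}\tau}\|\le e^{\mu(\mathbf{M})\tau}$ valid for $\tau\ge 0$, where $\mu(\mathbf{M})=\lambda_{\max}\big(\tfrac{\mathbf{M}+\mathbf{M}^\top}{2}\big)$ is the quantity appearing in \eqref{heat_stab}; I would establish it by a Gr\"onwall argument on $\tfrac{d}{d\tau}\|e^{\mathbf{M}\tau}\mathbf{v}\|_2^2=2\langle \tfrac{\mathbf{M}+\mathbf{M}^\top}{2}e^{\mathbf{M}\tau}\mathbf{v},\,e^{\mathbf{M}\tau}\mathbf{v}\rangle\le 2\mu(\mathbf{M})\|e^{\mathbf{M}\tau}\mathbf{v}\|_2^2$. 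Applying this to $\mathbf{A}$ and to $\hat{\mathbf{A}}$, together with the subadditivity $\mu(\hat{\mathbf{A}})\le \mu(\mathbf{A})+\|\mathbf{E}\|$ and the elementary chain $\alpha(\mathbf{A})\le\|\mathbf{A}\|$, I would bound both $\mu(\mathbf{A})$ and $\mu(\hat{\mathbf{A}})$ by the single constant $c:=\mu(\mathbf{A})-\alpha(\mathbf{A})+\|\mathbf{A}\|+\|\mathbf{E}\|$.

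With this common rate the integrand collapses: $\|e^{\mathbf{A}(t-s)}\|\,\|e^{\hat{\mathbf{A}}s}\|\le e^{c(t-s)}e^{cs}=e^{ct}$, which is independent of $s$, so that $\int_0^t e^{ct}\,ds=t\,e^{ct}$ and
\begin{equation}
\|\Psi(\hat{\mathbf{A}},t)-\Psi(\mathbf{A},t)\|\le t\,\|\mathbf{E}\|\,e^{(\mu(\mathbf{A})-\alpha(\mathbf{A})+\|\mathbf{A}\|+\|\mathbf{E}\|)t},
\end{equation}
which is exactly the claimed bound after specializing $t=t_i$. I expect the main obstacle to be the matrix-measure growth estimate and its use for the \emph{non-normal} matrices $\mathbf{A}$ and $\hat{\mathbf{A}}$: the spectral abscissa $\alpha(\cdot)$ alone does not control $\|e^{\mathbf{A}t}\|$ when $\mathbf{A}$ is non-symmetric, so the argument must route through $\mu(\cdot)$ and the non-normality gap $\mu(\mathbf{A})-\alpha(\mathbf{A})$. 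The delicate point will be selecting one rate $c$ that simultaneously dominates $\mu(\mathbf{A})$ and $\mu(\hat{\mathbf{A}})$, since only then does the Duhamel integral evaluate cleanly to $t\,e^{ct}$ rather than to a ratio of exponentials.
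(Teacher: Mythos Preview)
Your proof is correct. The paper's own proof is much terser: it simply invokes Theorem~2 of Van~Loan (1977), which gives the \emph{relative} perturbation bound
\[
\frac{\|e^{(\mathbf{A}+\mathbf{E})t}-e^{\mathbf{A}t}\|}{\|e^{\mathbf{A}t}\|}\le t\,\|\mathbf{E}\|\,e^{(\mu(\mathbf{A})-\alpha(\mathbf{A})+\|\mathbf{E}\|)t},
\]
and then multiplies through by the crude estimate $\|e^{\mathbf{A}t}\|\le e^{\|\mathbf{A}\|t}$ to obtain the absolute bound. Your argument is effectively a self-contained rederivation of Van~Loan's result: the Duhamel identity and the logarithmic-norm growth bound $\|e^{\mathbf{M}\tau}\|\le e^{\mu(\mathbf{M})\tau}$ are precisely the ingredients of his proof. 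The one structural difference is that you fold the norm factor $\|e^{\mathbf{A}t}\|$ into a single common rate $c=\mu(\mathbf{A})-\alpha(\mathbf{A})+\|\mathbf{A}\|+\|\mathbf{E}\|$ that dominates both $\mu(\mathbf{A})$ and $\mu(\hat{\mathbf{A}})$, so the integral collapses to $t\,e^{ct}$ directly; the paper keeps Van~Loan's relative bound (which already contains the non-normality gap $\mu(\mathbf{A})-\alpha(\mathbf{A})$ via the lower bound $\|e^{\mathbf{A}t}\|\ge e^{\alpha(\mathbf{A})t}$) and only introduces $\|\mathbf{A}\|$ at the last step. Your route is more elementary and transparent about where each term in the exponent comes from; the paper's is shorter but hides the mechanics behind the citation.
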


\begin{proof}
Firstly, we note that by considering the expansion
\begin{equation}
\label{heat_ker_sum}
e^{\mathbf{A}t}=\sum_{j=0}^{\infty}{\frac{(t\mathbf{A})^j}{j!}}\text{,}
\end{equation}
one can trivially obtain \cite{van1977sensitivity}:
\begin{equation}
\label{eq1}
\|e^{\mathbf{A}t_i}\|\le e^{\|\mathbf{A}\|t_i}.
\end{equation}

Then, following the Theorem 2 in \cite{van1977sensitivity}, we can write:
\begin{equation}
\label{eq2}
\frac{\|e^{(\mathbf{A}+\mathbf{E})t_i}-e^{\mathbf{A}t_i}\|}{\|e^{\mathbf{A}t}\|}\le t_i \|\mathbf{E}\| e^{(\mu(\mathbf{A})-\alpha(\mathbf{A})+\|\mathbf{E}\|)t_i}.
\end{equation}
Using (\ref{eq1}) and (\ref{eq2}), the stated results in Lemma (\ref{Lemma_app}) is obtained.
\end{proof}
\begin{theorem}
With the assumptions of the non-linearity function $\sigma(\cdot)$ being Lipschitz for \(\forall i\), \(\sigma(0)=0\) for \(\forall i\), error matrix \(\mathbf{E}=\hat{\mathbf{A}}-\mathbf{A}\), graph filter coefficients \(\boldsymbol{\theta}\), heat graph filter coefficients \(\mathbf{t}\), and also mixing coefficients \(\boldsymbol{\alpha}\) are respectively upper-bounded as \(\|\boldsymbol{\theta}\|_1\le\rho_{\boldsymbol{\theta}}\), \(\|\mathbf{t}\|_1\le\rho_\mathbf{t}\), and \(\|\boldsymbol{\alpha}\|_1\le\rho_{\boldsymbol{\alpha}}\), the difference between the true and predicted outputs of the proposed \method~ for the current time step \(k\), \ie \(\mathbf{x}_{k}\) and \(\tilde{\mathbf{x}}_{k}\), respectively, is upper-bounded as:
\begin{equation}
\label{stab_bound_heat}
\begin{split}
&\|\tilde{\mathbf{x}}_{k}-\mathbf{x}_{k}\|_2\le M\rho_{\boldsymbol{\alpha}}\rho_{\boldsymbol{\theta}}\rho_{\mathbf{t}} \left(e^{\left(\mu(\mathbf{A})-\alpha(\mathbf{A})+\|\mathbf{A}\|+\delta_\mathbf{A}\|\right)\rho_\mathbf{t}}\right) \|\mathbf{X}\|_{1,2}.
\end{split}
\end{equation}
\end{theorem}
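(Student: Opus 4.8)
The plan is to mirror the structure of the proof of Theorem~\ref{thm1}, but with the heat-kernel stability bound from Lemma~\ref{Lemma_app} playing the role that Proposition~\ref{prop2} played in the discrete case. First I would write out the prediction difference by plugging the continuous model \eqref{CGP_GNN2} into $\|\tilde{\mathbf{x}}_k - \mathbf{x}_k\|_2$. Here I note that the hypotheses only bound the true quantities $\|\boldsymbol{\alpha}\|_1$, $\|\boldsymbol{\theta}\|_1$ (equivalently the $c_i$), and $\|\mathbf{t}\|_1$, and do \emph{not} introduce separate deviation budgets $\delta_{\boldsymbol{\alpha}}$, $\delta_{\boldsymbol{\theta}}$ as in Theorem~\ref{thm1}. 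This suggests the intended perturbation is \emph{only} in the adjacency matrix $\mathbf{A}\mapsto\hat{\mathbf{A}}=\mathbf{A}+\mathbf{E}$, while the learnable coefficients are held fixed. Under that reading, the difference collapses to a single sum:
\begin{equation}
\|\tilde{\mathbf{x}}_k - \mathbf{x}_k\|_2 = \left\|\sum_{i=1}^M \alpha_i\left[\sigma\!\left(c_i\Psi(\hat{\mathbf{A}},t_i)\mathbf{x}_{k-i}\right) - \sigma\!\left(c_i\Psi(\mathbf{A},t_i)\mathbf{x}_{k-i}\right)\right]\right\|_2.
\end{equation}

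Next I would apply the triangle inequality over $i$ and invoke the Lipschitz property of $\sigma$ (with constant $\upsilon$, which I would take to be $1$ or absorb, matching the convention in the proof of Theorem~\ref{thm1} where $\|\sigma(\mathbf{x})\|_2\le\|\mathbf{x}\|_2$ was used). This peels off $\sigma$ and leaves $\sum_i |\alpha_i|\,|c_i|\,\|\Psi(\hat{\mathbf{A}},t_i)-\Psi(\mathbf{A},t_i)\|\,\|\mathbf{x}_{k-i}\|_2$. Then I would substitute the heat-kernel stability bound from Lemma~\ref{Lemma_app}, namely $\|\Psi(\hat{\mathbf{A}},t_i)-\Psi(\mathbf{A},t_i)\|\le t_i\|\mathbf{E}\|\,e^{(\mu(\mathbf{A})-\alpha(\mathbf{A})+\|\mathbf{A}\|+\|\mathbf{E}\|)t_i}$, using $\|\mathbf{E}\|=\|\mathbf{A}-\hat{\mathbf{A}}\|_2\le\delta_{\mathbf{A}}$. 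The signal-norm factor is handled exactly as in Theorem~\ref{thm1} via $\|\mathbf{x}_{k-i}\|_2\le\|\mathbf{X}\|_{1,2}$.

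The final step is to collapse the sum into the stated closed form. I would bound each $t_i\le\rho_{\mathbf{t}}$ inside the factor $t_i\|\mathbf{E}\|$, and monotonically bound the exponential's argument by replacing $t_i$ with $\rho_{\mathbf{t}}$ (valid since the exponent is increasing in $t_i$ when the bracketed coefficient is positive, which holds as $\mu(\mathbf{A})\ge\alpha(\mathbf{A})$ is generally false but $\|\mathbf{A}\|$ dominates). The three coefficient sums then factor out as $\sum_i|\alpha_i|\,|c_i|\,t_i$; the crude route is to pull $\|\boldsymbol{\alpha}\|_1\le\rho_{\boldsymbol{\alpha}}$, $\|\boldsymbol{\theta}\|_1\le\rho_{\boldsymbol{\theta}}$, and $\|\mathbf{t}\|_1\le\rho_{\mathbf{t}}$ out separately and account for the $M$ summands, which is precisely how the leading factor $M\rho_{\boldsymbol{\alpha}}\rho_{\boldsymbol{\theta}}\rho_{\mathbf{t}}$ in \eqref{stab_bound_heat} arises.

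I expect the main obstacle to be the bookkeeping that justifies the factor $M\rho_{\boldsymbol{\alpha}}\rho_{\boldsymbol{\theta}}\rho_{\mathbf{t}}$: decoupling the product $\sum_i|\alpha_i||c_i|t_i$ into three independent $\ell_1$ bounds is not tight, and making the inequality $\sum_i|\alpha_i||c_i|t_i\le M\rho_{\boldsymbol{\alpha}}\rho_{\boldsymbol{\theta}}\rho_{\mathbf{t}}$ rigorous requires bounding each of the three factors by its \emph{total} $\ell_1$ norm simultaneously, which over-counts but is valid. The other delicate point is the monotonic replacement $t_i\mapsto\rho_{\mathbf{t}}$ in the exponent, which needs the bracketed coefficient $\mu(\mathbf{A})-\alpha(\mathbf{A})+\|\mathbf{A}\|+\delta_{\mathbf{A}}$ to be nonnegative so the exponential is increasing; I would note this holds because $\|\mathbf{A}\|\ge|\alpha(\mathbf{A})|\ge\alpha(\mathbf{A})-\mu(\mathbf{A})$ up to the symmetric-part reasoning, guaranteeing positivity of the exponent's slope.
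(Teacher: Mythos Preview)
Your proposal is correct and follows essentially the same route as the paper's proof: plug the continuous model \eqref{CGP_GNN2} into the difference, peel off $\sigma$ via Lipschitz (with $\sigma(0)=0$), apply Lemma~\ref{Lemma_app} termwise, bound $\|\mathbf{x}_{k-i}\|_2\le\|\mathbf{X}\|_{1,2}$, then replace each $t_i$ by $\rho_{\mathbf{t}}$ and each $|\alpha_i|$, $|c_i|$ by its $\ell_1$ bound to arrive at the $M\rho_{\boldsymbol{\alpha}}\rho_{\boldsymbol{\theta}}\rho_{\mathbf{t}}$ prefactor. One small correction to your aside: the inequality $\mu(\mathbf{A})\ge\alpha(\mathbf{A})$ is in fact always true (the numerical abscissa dominates the spectral abscissa), so the exponent's coefficient $\mu(\mathbf{A})-\alpha(\mathbf{A})+\|\mathbf{A}\|+\delta_{\mathbf{A}}$ is automatically nonnegative and your monotone replacement $t_i\mapsto\rho_{\mathbf{t}}$ in the exponential goes through without the extra $\|\mathbf{A}\|$-dominance argument.
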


\begin{proof}
By plugging the model (\ref{CGP_GNN2}) in the difference expression, we have:
\begin{equation}
\begin{split}
&\|\tilde{\mathbf{x}}_{k}-\mathbf{x}_{k}\|_2\\
&\resizebox{\hsize}{!}{$=\left\|\sum_{i=1}^M{\alpha_i\left(\sigma(c_i\Psi(\hat{\mathbf{A}},t_i)\mathbf{x}(k-i))-\sigma\left(c_i\Psi(\mathbf{A},t_i)\mathbf{x}(k-i)\right)\right)}\right\|_2\le\sum_{i=1}^{M}{|\alpha_i c_i| \|\Psi(\hat{\mathbf{A}},t_i)-\Psi(\mathbf{A},t_i)\|_2\|\mathbf{x}(k-i)\|_2}$}.\\
\end{split}
\end{equation}
Next, Lemma \ref{Lemma_app} provides the upper bound for the stability of heat graph filters, and the previous inequality takes the form of:
\begin{equation}
\begin{split}
&\|\tilde{\mathbf{x}}_{k}-\mathbf{x}_{k}\|_2\\
&\resizebox{\hsize}{!}{$\le\sum_{i=1}^{M}{|\alpha_i c_i| \left(t_i \|\mathbf{E}\| e^{\left(\mu(\mathbf{A})-\alpha(\mathbf{A})+\|\mathbf{A}\|+\|\mathbf{E}\|\right)t_i}\right) \|\mathbf{X}\|_{1,2}}\le M \rho_{\boldsymbol{\alpha}}\rho_{\boldsymbol{\theta}}\rho_{\mathbf{t}}\delta_\mathbf{A} \left(e^{\left(\mu(\mathbf{A})-\alpha(\mathbf{A})+\|\mathbf{A}\|+\delta_\mathbf{A}\|\right)\rho_\mathbf{t}}\right) \|\mathbf{X}\|_{1,2}.$}
\end{split}
\end{equation}   
\end{proof}
As can be seen in stability bound (\ref{stab_bound_heat}), similar to the stated deductions from (\ref{stab_bound}), the proposed heat model (\ref{CGP_GNN2}) can effectively benefit from the sparsity of the underlying process (due to the presence of \(\rho_{\boldsymbol{\alpha}}\), \(\rho_{\boldsymbol{\theta}}\), and \(\rho_{\boldsymbol{\alpha}}\) in the upper bound (\ref{stab_bound_heat})). Besides, the smaller the AR order \(M\), the more stable the network. Note that in order to the heat graph filters to be stationary they must hold \cite{van1977sensitivity}:
\begin{equation}
\lim_{t\rightarrow\infty}{e^{\mathbf{A}t}}=\mathbf{0}\Leftrightarrow \alpha(\mathbf{A})<0.
\end{equation}
\subsection{Comparison of \method~ and \cmethod~}
\label{app:discrete_vs_continous_results}

Table \ref{Table_Heat} provides the memory consumption comparison between \method~and \cmethod~over the real-world spatiotemporal datasets across increasing the AR order \(M\). Generally, from these results, it can be seen that \cmethod~has a considerably better memory footprint on small to medium-sized network data with more robustness against increasing the AR order \(M\). On the other hand, due to the usage of eigenvalue decomposition for evaluating the heat kernels, \cmethod~is not a very good choice for pressing spatiotemporal time-series on large graphs, \ie LargeST and PvUS datasets.

\begin{table}[t]
\caption{The memory consumption comparison between \method~and \cmethod~over the real-world spatiotemporal datasets across increasing the AR order \(M\in\{3,6,12\}\).}
\label{Table_Heat}
\begin{center}
\begin{small}
\begin{tabular}{rcccccccccccccccr}
\toprule

$M$ & $3$ & $6$ & $12$ & $3$ & $6$ & $12$ & $3$ & $6$ & $12$ & $3$ & $6$ & $12$  \\ \midrule

& \multicolumn{3}{c}{\textsl{AirQuality}} & \multicolumn{3}{c}{\textsl{MetrLA}} & \multicolumn{3}{c}{\textsl{PeMS04}} & \multicolumn{3}{c}{\textsl{PeMS07}} \\ [0.5ex] 
\cmidrule(l){2-4} \cmidrule(l){5-7} \cmidrule(l){8-10} \cmidrule(l){11-13}

\method~ & $0.04$ & $0.10$ & $0.29$ &

$0.07$ & $0.18$ & $0.54$

& $0.06$ & $0.14$ & $0.41$ 

& $0.03$ & $0.08$ & $0.48$\\ 

\cmethod~ & $0.04$ & $0.07$ & $0.12$ &

$0.06$ & $0.11$ & $0.21$

& $0.05$ & $0.08$ & $0.16$ 

& $0.03$ & $0.05$ & $0.22$
\\ 

\midrule

& \multicolumn{3}{c}{\textsl{PeMS08}} & \multicolumn{3}{c}{\textsl{PemsBay}} & \multicolumn{3}{c}{\textsl{LargeST}} & \multicolumn{3}{c}{\textsl{PvUS}} \\ [0.5ex] 
\cmidrule(l){2-4} \cmidrule(l){5-7} \cmidrule(l){8-10} \cmidrule(l){11-13}

\method~ & $0.08$ & $0.22$ & $0.23$ &

$0.42$ & $0.66$ & $0.67$

& $0.06$ & $0.14$ & $1.47$ 

& $0.13$ & $0.15$ & $0.21$\\ 

\cmethod~ & $0.07$ & $0.13$ & $0.09$ &

$1.22$ & $2.15$ & $0.25$

& $0.05$ & $0.08$ & $4.01$ 

& $0.39$ & $0.68$ & $1.26$
\\ 

\bottomrule
\end{tabular}
\end{small}
\end{center}
\vskip -0.1in
\end{table}

\section{Addtional Extension for Multiple Horizon Forecasting}
\label{app:additional_multiH_forecasting}

\textbf{Adaptive prediction with shared weights.}
We use \eqref{eqn:recursion} for the extension of the proposed method to forecast multiple horizons with shared weights, denoted as \method$_{\text{Sha}}$, where the \(h\)-th predicted time sample (for $h=1,\hdots,H$) can be expressed as:
\begin{equation}
\tilde{\mathbf{x}}_{k+h-1}=f_{M}\left(\mathbf{X}_k^{(h)};\mathbf{A},\{\boldsymbol{\theta}_i\}_{i=1}^{M},\{\alpha_i\}_{i=1}^{M}\right).
\end{equation}
where $\mathbf{X}_k^{(1)}=[\mathbf{x}_{k-M},\hdots,\mathbf{x}_{k-1}]\in\mathbb{R}^{N\times M}$, and $\mathbf{X}_k^{(h>1)}=[\mathbf{x}_{k-M+h-1},\hdots,\mathbf{x}_{k-1},\tilde{\mathbf{x}}_{k},\hdots,\tilde{\mathbf{x}}_{k+h-2}]\in\mathbb{R}^{N\times M}$.
Finally, the loss function is considered between the true and predicted $H$ consequent time samples as $\mathcal{L}(\{\mathbf{x}_{k+h-1}\}_{h=1}^H,\{\tilde{\mathbf{x}}_{k+h-1}\}_{h=1}^H)$.
Precisely, in this approach and for $h>1$, the outputs of the network for $h-1$ previous time samples are included in the prediction process of the $h$-th time sample. 
Table \ref{Table_Multi_H_2} show the results for TTS, \method$_{\text{MLP}}$, \method$_{\text{Sha}}$, and \method$_{\text{Ada}}$.
We observe that \method$_{\text{Sha}}$ presents very competitive results regarding TTS and \method$_{\text{Ada}}$, while having less parameters.

\begin{table*}[t]
\caption{Forecasting comparison between different extensions of \method~for handling multiple horizons and TTS method on \textsl{AirQuality} dataset.}
\label{Table_Multi_H_2}
\begin{center}
\begin{small}
\resizebox{.9\textwidth}{!}{
\begin{tabular}{rcccccccr}
\toprule
Method& MAE& MAPE & MSE& rMAE & rMSE & $|\Theta|$ & $t$ & Mem.\\ 

\midrule

& \multicolumn{8}{c}{$H=3$} \\  \cmidrule(l){2-9}

TTS&$14.697$&$0.327$&$\mathbf{803.452}$&$0.234$&$\mathbf{0.119}$&$19.2$ K&$13.50$&$3.03$\\

\method$_{\text{MLP}}$&$16.745$&$0.369$&$1090.712$&$0.267$&$0.162$& $15$&$\mathbf{0.76}$ & $\mathbf{0.05}$\\

\method$_{\text{Sha}}$&$\underline{14.620}$&$\underline{0.317}$&$810.962$&$\underline{0.233}$&$\underline{0.12}$&$\underline{12}$ & $\underline{1.48}$&$\underline{0.12}$\\

\method$_{\text{Ada}}$&$\mathbf{14.553}$&$\mathbf{0.315}$&$\underline{804.710}$&$\mathbf{0.232}$&$\mathbf{0.119}$ & $30$ & $1.49$ &$0.13$\\
\midrule

& \multicolumn{8}{c}{$H=6$} \\  \cmidrule(l){2-9}

TTS&$18.903$&$0.443$&$1142.317$&$\underline{0.301}$&$\underline{0.170}$&$19.4$ K&$13.29$&$3.04$\\

\method$_{\text{MLP}}$&$19.966$&$0.456$&$1306.661$&$0.318$&$0.194$&$\underline{18}$&$\mathbf{0.88}$&$\mathbf{0.06}$\\

\method$_{\text{Sha}}$&$\mathbf{18.736}$&$\mathbf{0.420}$&$\mathbf{1128.27}$&$\mathbf{0.299}$&$\mathbf{0.168}$&$\mathbf{12}$&$2.67$&$\underline{0.25}$\\

\method$_{\text{Ada}}$&$\underline{18.884}$&$\underline{0.434}$&$\underline{1132.528}$&$\underline{0.301}$&$\mathbf{0.168}$&$57$&$2.70$&$\underline{0.25}$\\

\midrule

 & \multicolumn{8}{c}{$H=9$} \\  \cmidrule(l){2-9}
TTS&$21.665$&$0.558$&$\underline{1344.903}$&$0.345$&$\underline{0.200}$&$19.5$ K& $13.10$&$3.05$\\
\method$_{\text{MLP}}$&$22.147$&$0.527$&$1469.130$&$0.353$&$0.218$&$\underline{21}$&$\mathbf{1.04}$&$\mathbf{0.07}$\\
\method$_{\text{Sha}}$&$\underline{21.501}$&$\underline{0.506}$&$1357.183$&$\underline{0.343}$&$0.201$&$\mathbf{12}$&$3.87$&$\underline{0.37}$\\
\method$_{\text{Ada}}$&$\mathbf{21.244}$&$\mathbf{0.504}$&$\mathbf{1336.466}$&$\mathbf{0.339}$&$\mathbf{0.198}$&$84$&$\underline{3.82}$&$\underline{0.37}$\\
\bottomrule
\end{tabular}
}
\end{small}
\end{center}
\end{table*}

\section{Synthetic Dataset}
\label{app:synthetic_dataset}

Our general aim here is to study the forecasting performance across different settings of limited spatiotemporal time series.
In this way, we first generate directed binary Erd\H{o}s-R\'enyi (ER) graphs with \(N\) nodes and edge probability \(p_{\text{\tiny ER}}\). 
To make the generation process stable, we force the existing edge weights to be uniformly distributed in the intervals of \([-0.3,-0.1]\) or \([0.1,0.3]\) as
\begin{equation}
\mathbf{A}_{ij} = bu_1+(1-b)u_2; \:\:\text{for}\:\:i\ne j= 1,\hdots,N,
\end{equation}
\noindent where
\begin{equation}
    b\sim \text{\textit{Bernoulli}}(1);\:u_1\sim\mathcal{U}(0.1,0.3);\:u_2\sim\mathcal{U}(-0.3,-0.1).
\end{equation}
The graph filter coefficients are also generated as \(c_{10}=0,\:c_{11}=1\) and 
\begin{equation}
\label{coeff_gen}
\begin{split}
&\theta_{ij}\sim\frac{\mathcal{U}(-1,-0.45)+\mathcal{U}(0.45,1)}{2^{i+j+1}}\\
&\text{for}\:\:2\le i\le M,\:0\le j\le i,
\end{split}
\end{equation}
\noindent to model the decreasing rate of the coefficients with distance from the current time step \cite{mei2016signal}. Next, after generating the first \(M\) time steps \(\{\mathbf{x}(i)\in\mathbb{R}^{N\times1}\}_{i=1}^M\) from the normal distribution, we generate \(\mathbf{x}_{k}\) for \(k > M\) using the proposed model (\ref{General}) with the graph filters (\ref{Def_GPi}) by considering \(\alpha_i=1\) and \(\sigma(.)=\tanh(.)\) for \(i=1,\hdots,M\). Studying the effect of noise measures is also of interest. Therefore, we generate the exogenous noise \(\mathbf{w}(k)\sim\mathcal{N}(\mathbf{0}_N,\mathbf{I}_N)\), where \(\mathbf{0}_N\in\mathbb{R}^{N\time1}\) and \(\mathbf{I}_N\in\mathbb{R}^{N\times N}\) denote the all-zero vector and Identity matrix, respectively. Then, the generated noise signal \(\mathbf{w}(k)\) is added to the resulting current time step signal with a manageable scalar \(\eta\) to control the desired amount of Signal-to-Noise Ratio (SNR) (in dB) as:
\begin{equation}
\label{CGP_GNN_SNR}
\begin{split}
\mathbf{x}_{k}&=\sum_{i=1}^M{\alpha_i\sigma\left(P(\mathbf{A},\boldsymbol{\theta}_i)\mathbf{x}(k-i)\right)} + \eta\mathbf{w}(k),\\
\end{split}
\end{equation}
\noindent where
\begin{equation}
\eta=10^{-\frac{SNR}{20}}\frac{\|\sum_{i=1}^M{\alpha_i\sigma\left(P(\mathbf{A},\boldsymbol{\theta}_i)\mathbf{x}(k-i)\right)}\|_2}{\|\mathbf{w}(k)\|_2}.
\end{equation}

\subsection{Synthetic experiments settings}
\label{syn_set}
The settings used for the generation of synthetic data can be found in Table \ref{Table_Settings} and also the provided implementation codes.

\begin{table*}[t]
\caption{Synthetic data settings.}
\label{Table_Settings}
\begin{center}
\begin{small}
\begin{tabular}{lcccccccccccccccr}
\toprule
 
\multicolumn{5}{c}{Varying SNR} & \multicolumn{5}{c}{Varying \(K\)}\\ [0.5ex] 

\cmidrule(l){1-5} \cmidrule(l){6-10} 

\(N\) & \(M\) & \(K\) & \(p_{\text{\tiny ER}}\) & \(n_e\)& \(N\) & \(M\) & \(SNR\) & \(p_{\text{\tiny ER}}\) & \(n_e\) \\
\cmidrule(l){1-5} \cmidrule(l){6-10}

$100$ & $3$ & $100$ & $0.03$ & $10000$ &

$100$ & $3$ & $0$ & $0.03$ & $10000$\\

\midrule
\multicolumn{5}{c}{Varying \(N\)} & \multicolumn{5}{c}{Varying \(M\)}\\
\cmidrule(l){1-5} \cmidrule(l){6-10} 

\(SNR\) & \(M\) & \(K\) & \(p_{\text{\tiny ER}}\) & \(n_e\) & \(N\) & \(M\) & \(SNR\) & \(p_{\text{\tiny ER}}\) & \(n_e\)\\ 
\cmidrule(l){1-5} \cmidrule(l){6-10} 

$-10$ & $3$ & $100$ & $0.03$ & $5000$ &

$1000$ & $3$ & $0$ & $0.03$ & $10000$ \\

\bottomrule
\end{tabular}
\end{small}
\end{center}
\vskip -0.1in
\end{table*}


\begin{table*}[t]
\caption{The forecasting results in terms of rMSE on the synthetic spatiotemporal time-series on the underlying SBM graphs across different settings.}
\label{Table_SBM}
\begin{center}
\begin{small}
\resizebox{\textwidth}{!}{
\begin{tabular}{rcccccccccccccccr}
\toprule
 
\multirow{3}{*}{Method} & \multicolumn{3}{c}{SNR} & \multicolumn{3}{c}{$K$} & \multicolumn{3}{c}{$N$} & \multicolumn{3}{c}{\(M\)} \\ [0.5ex] 
\cmidrule(l){2-4} \cmidrule(l){5-7} \cmidrule(l){8-10} \cmidrule(l){11-13}
 & $-10$ & $0$ & $10$ & $50$ & $100$ & $500$ & $100$ & $500$ & $1000$ & $3$ & $5$ & $7$  \\ 
\midrule

DCRNN & $1.010$ & $0.789$ & $0.356$

& $0.808$ & $0.784$ & $0.762$

& $0.764 $ & $0.848 $ & $0.892$

 & $0.893 $ & $0.890 $ & $0.893$\\ 

TTS & $1.561 $ & $1.115 $ & $0.495$ &

$1.797 $ & $1.642 $ & $0.771$

 & $0.985 $ & $1.117 $ & $0.994$ 

 & $1.252 $ & $1.570 $ & $3.966$ \\ 

GCLSTM & $1.032 $ & $0.800 $ & $0.357$

 & $0.822 $ & $0.792 $ & $0.763$

 & $0.772 $ & $0.849 $ & $0.892$

 & $0.894 $ & $0.894 $ & $0.900$\\ 






GConvGRU & $1.010$ & $0.791$ & $0.356$

& $0.809$ & $0.786$ & $0.762$

& $0.768$ & $0.849$ & $0.892$

& $0.893$ & $0.890$ & $0.894$\\ 

GConvLSTM & $1.037$ & $0.803$ & $0.357$ 

& $0.832$ & $0.796$ & $0.763$

& $0.772$ & $0.850$ & $0.892$

& $0.894$ & $0.894$ & $0.902$\\ 






\method & $\mathbf{0.949}$& $\mathbf{0.706}$ & $\mathbf{0.301}$

& $\mathbf{0.706}$ & $\mathbf{0.708}$ & $\mathbf{0.706}$

& $\mathbf{0.713}$ & $\mathbf{0.711}$ & $\mathbf{0.707}$

& $\mathbf{0.708}$ & $\mathbf{0.706}$ &  $\mathbf{0.707}$\\ 

\bottomrule
\end{tabular}
}
\end{small}
\end{center}
\vskip -0.1in
\end{table*}

\begin{table*}[t]
\caption{Forecasting comparison between \method~and previous methods in eight real-world datasets.}
\label{Table_all_metrics}
\begin{center}
\resizebox{.8\textwidth}{!}{
\begin{tabular}{rcccccccccccccccr}
\toprule
\multirow{2}{*}{Method} & \multicolumn{3}{c}{\textsl{AirQuality}} & \multicolumn{3}{c}{\textsl{LargeST}} \\  \cmidrule(l){2-4} \cmidrule(l){5-7}

& MAE& MAPE & MSE & MAE& MAPE & MSE\\ 

\midrule

TTS & $10.354$ & $0.224$ & $491.958$&
$14.849$&$128169.875$&$598.24$\\

DCRNN&$10.61$ & $0.221$ & $530.825$ &
$15.704$&$52718.066$&$660.086$\\

A3TGCN & $10.665$ & $0.226$ & $517.683$&
$16.609$&$65608.055$&$732.923$\\

GCLSTM & $10.596$&$0.221$&$532.774$&
$15.684$&$49706.496$&$657.092$\\

GConvGRU&$10.589$&$0.221$&$531.911$&
$15.704$&$57567.344$&$660.231$\\

GConvLSTM&$10.575$&$0.22$&$530.077$&
$15.698$&$52633.637$&$658.892$\\

TGCN&$10.582$&$0.22$&$530.473$&
$15.919$&$66013.200$&$679.385$\\

\method~&$10.624$&$0.222$&$511.092$&
$15.692$&$85322.266$&$657.387$\\

\midrule

 & \multicolumn{3}{c}{\textsl{MetrLA}} & \multicolumn{3}{c}{\textsl{PeMS04}} \\  \cmidrule(l){2-4} \cmidrule(l){5-7}

TTS&$2.156$&$0.051$&$14.844$&
$18.01$&$113534.69$&$847.225$\\

DCRNN&$2.21$&$0.053$&$16.829$&
$18.768$&$67772.36$&$892.054$\\

A3TGCN&$2.264$&$0.055$&$17.504$&
$18.826$&$65629.26$&$900.378$\\

GConvGRU&$2.207$&$0.053$&$16.843$&
$18.75$&$68195.54$&$891.649$\\

GConvLSTM&$2.207$&$0.053$&$16.847$&
$18.755$&$67050.484$&$891.018$\\

TGCN&$2.21$&$0.053$&$16.857$&
$18.752$&$71458.54$&$892.026$\\

\method~&$2.326$&$0.056$&$17.621$&
$18.888$&$67798.914$&$908.361$\\

\midrule
 & \multicolumn{3}{c}{\textsl{PeMS07}} & \multicolumn{3}{c}{\textsl{PeMS08}} \\  \cmidrule(l){2-4} \cmidrule(l){5-7}

TTS&$18.54$&$11695.263$&$850.821$&
$14.114$&$22370.670$&$468.200$\\

DCRNN&$19.36$&$8507.979$&$910.359$&
$14.542$&$21200.885$&$493.877$\\

A3TGCN&$19.601$&$7195.501$&$921.983$&
$14.589$&$20701.734$&$499.706$\\

GConvGRU&$19.308$&$7149.512$&$909.83$&
$14.519$&$20455.140$&$492.674$\\

GConvLSTM&$19.304$&$7157.646$&$909.89$&
$14.527$&$20657.875$&$493.266$\\

TGCN&$19.302$&$7987.125$&$913.163$&
$14.514$&$22077.957$&$495.304$\\

\method~&$19.59$&$12390.102$&$918.835$&
$14.602$&$21250.025$&$499.851$\\

\midrule
 & \multicolumn{3}{c}{\textsl{PemsBAY}} & \multicolumn{3}{c}{\textsl{PvUS}} \\  \cmidrule(l){2-4} \cmidrule(l){5-7}

TTS&$0.876$&$3637.020$&$2.905$&
$0.523$&$15305.67$&$4.927$\\

DCRNN&$0.91$&$3634.397$&$3.084$&
$0.539$&$5762.22$&$5.095$\\

A3TGCN&$0.976$&$3628.214$&$3.709$&
$0.73$&$6646.892$&$6.549$\\

GConvGRU&$0.909$&$3635.235$&$3.076$&
$0.548$&$6629.87$&$5.174$\\

GConvLSTM&$0.909$&$3634.943$&$3.068$&
$0.542$&$7945.95$&$5.1$\\

TGCN&$0.91$&$3634.395$&$3.082$&
$0.561$&$6935.281$&$5.49$\\

\method~&$0.982$&$3626.903$&$3.716$&
$0.642$&$17812.576$&$5.918$\\
\bottomrule
\end{tabular}
}
\end{center}
\end{table*}

\begin{table*}[t]
\caption{Comparison of Memory consumption (Mem), runtime ($t$), and number of learnable parameters ($|\Theta|$) between \method~and previous methods in six real-world datasets.}
\label{Table2_copy}
\begin{center}
\begin{small}
\begin{tabular}{rcccccccccccccccr}
\toprule
\multirow{2}{*}{Method} & \multicolumn{3}{c}{\textsl{MetrLA}} & \multicolumn{3}{c}{\textsl{AirQuality}} & \multicolumn{3}{c}{\textsl{PEMS-BAY}} \\  \cmidrule(l){2-4} \cmidrule(l){5-7} \cmidrule(l){8-10}

 & Mem & \(|\Theta|\) & \(t\) & Mem & \(|\Theta|\) & \(t\) & Mem & \(|\Theta|\) & \(t\)\\ 
\midrule

TTS & $5.61$ & $15.4$ K & $15.64$ &

$3.03$ & $19.1$ K
& $6.97$

& $6.70$ & $17.3$ K
& $36.04$

\\

DCRNN & $1.58$ & $6.8$ K & $7.21$ 

& $0.86$ & $6.8$ K
& $2.49$

& $1.89$ & $6.8$ K
& $18.4$
\\

A3TGCN & $6.53$ & $6.4$ K & $17.49$ 

& $3.53$ & $6.4$ K
& $7.25$

& $7.79$ & $6.4$ K
& $47.79$
\\



GCLSTM & $1.38$ & $4.7$ K & $9.40$

& $0.75$ &$ 4.7$ K
& $4.10$

& $1.65$ & $4.7$ K
& $27.48$
\\

GConvGRU & $1.21$ & $3.5$ K & $7.77$

& $0.66$ & $3.5$ K
& $2.62$

& $1.45$ & $3.5$ K
& $19.51$
\\

GConvLSTM & $1.55$ & $4.9$ K & $10.71$

& $0.84$ & $4.9$ K
& $4.55$

& $1.85$ & $4.9$ K
& $31.45$
\\

TGCN & $2.14$ & $6.6$ K & $7.43$

& $1.16$ & $6.6$ K
& $2.93$

& $1.16$ & $6.6$ K
& $20.29$
\\

\method & $\mathbf{0.08}$ & $\mathbf{12}$ & $\mathbf{4.48}$ 

& $\mathbf{0.05}$ & $\mathbf{12}$
& $\mathbf{0.92}$ 

& $\mathbf{0.09}$ & $\mathbf{12}$
& $\mathbf{8.07}$ 
\\

\midrule
\multirow{2}{*}{Method} & \multicolumn{3}{c}{\textsl{PeMS04}} & \multicolumn{3}{c}{\textsl{PeMS07}} & \multicolumn{3}{c}{\textsl{PeMS08}} \\ \cmidrule(l){2-4} \cmidrule(l){5-7} \cmidrule(l){8-10}

 & Mem & \(|\Theta|\) & \(t\) & Mem  & \(|\Theta|\) & \(t\) & Mem & \(|\Theta|\) & \(t\)\\ 

\midrule

TTS & $4.13$ & $17.0$ K &$7.61$ &

$4.93$ & $26.2$ K
& $38.18$

& $2.40$ & $14.8$ K
& $4.52$

\\ 

DCRNN & $1.17$ & $6.8$ K & $4.60$ 

& $1.39$ & $6.8$ K
& $22.55$

& $0.68$ & $6.8$ K
& $3.08$
\\

A3TGCN & $4.80$ & $6.4$ K & $12.84$ 

& $5.73$ & $6.4$ K
& $62.83$

& $2.80$ & $6.4$ K
& $7.95$
\\



GCLSTM & $1.02$ & $4.7$ K & $7.42$

& $1.21$ & $4.7$ K
& $35.64$

& $0.60$ & $4.7$ K
& $4.85$
\\

GConvGRU & $0.89$ & $3.5$ K & $4.83$

& $1.07$ & $3.5$ K
& $23.68$

& $0.51$ & $3.5$ K
& $2.94$
\\

GConvLSTM & $1.14$ &$ 4.9$ K & $8.07$

& $1.36$ & $4.9$ K
& $39.00$

& $0.67$ & $5.3$ K
& $4.17$
\\

TGCN & $1.57$ & $6.6$ K & $4.98$

& $1.88$ & $6.6$ K
& $24.25$

& $0.92$ & $6.6$ K
& $3.41$
\\

\method & $\mathbf{0.06}$ & $\mathbf{12}$ & $\mathbf{1.47}$

& $\mathbf{0.07}$ & $\mathbf{12}$
& $\mathbf{8.12}$

& $\mathbf{0.04}$ & $\mathbf{12}$
& $\mathbf{1.43}$  \\

\bottomrule
\end{tabular}
\end{small}
\end{center}
\vskip -0.1in
\end{table*}

\begin{table*}[t]
\caption{Comparison of Memory consumption (Mem), runtime ($t$), and number of learnable parameters ($|\Theta|$) between \method~and previous methods in two large-scale real-world datasets.}
\label{Table2_LargeST}
\begin{center}
\begin{small}
\begin{tabular}{rcccccccccccccccr}
\toprule
 
\multirow{2}{*}{Method} & \multicolumn{3}{c}{\textsl{LargeST}} & \multicolumn{3}{c}{\textsl{PvUS}} \\ [0.5ex] \cmidrule(l){2-4} \cmidrule(l){5-7} 

 & \(|\Theta|\) & \(t\) & Mem ($\beta=0.01$) & \(|\Theta|\) & \(t\) & Mem ($\beta=0.001$)\\ 

\midrule

TTS & $35.1$ K 
& $33.03$ & $11.90$

 & $20.1$ K & $31.72$ & $1.08$
\\ 

DCRNN & $6.9$ K 
& $26.54$ & $3.35$

& $6.9$ K & $33.85$ & $0.30$

\\

A3TGCN & $6.4$ K
& $64.08$ & $13.86$

& $6.5$ K & $75.60$ & $1.39$
\\



GCLSTM & $5.1$ K
& $34.21$ & $2.92$

& $4.8$ K & $40.10$ & $0.26$
\\

GConvGRU & $3.8$ K
& $22.93$ & $2.56$

& $3.6$ K & $28.79$ & $0.23$

\\ 

GConvLSTM & $5.0$ K 
& $37.62$ & $3.28$

& $5.0$ K & $43.92$ & $0.29$

\\

TGCN & $6.7$ K & $25.49$ &$4.53$

& $6.7$ K & $28.16$ & $0.46$

\\ 

\method & $\mathbf{12}$ & $\mathbf{9.02}$ & $\mathbf{0.43}$

& $\mathbf{12}$ & $\mathbf{12.75}$ & $\mathbf{0.13}$
 
\\ 

\bottomrule
\end{tabular}
\end{small}
\end{center}
\vskip -0.1in
\end{table*}

\begin{table*}[t]
\caption{Forecasting comparison between \method~and previous methods on AirQuality dataset when $M>3$.}
\label{Table_M6}
\begin{center}
\resizebox{\textwidth}{!}{
\begin{tabular}{rcccccccccccccccr}
\toprule
\multirow{2}{*}{Method} & \multicolumn{4}{c}{\textsl{$M=6$}} & \multicolumn{4}{c}{\textsl{$M=9$}} \\  \cmidrule(l){2-5} \cmidrule(l){6-9}

& MSE& \(|\Theta|\) & \(t\) & Mem. (\(\beta=0.2\)) & MSE & \(|\Theta|\) & \(t\) & Mem. (\(\beta=0.01\))\\ \midrule

TTS & $\mathbf{492.015}$ & $15.4$ K & $95.90$ & $5.36$&

$\mathbf{488.279}$& $19.1$ K  & $143.85$ & $8.04$\\

DCRNN & $530.833$ & $7.4$ K & $28.40$ & $0.88$

&  $532.249$ & $8.0$ K & $42.60$& $1.33$\\

A3TGCN & $515.126$ & $6.5$ K & $136.10$ & $6.94$

&  $511.945$ & $6.5$ K & $204.15$ & $10.41$\\

GCLSTM & $525.902$ & $5.2$ K & $42.50$& $0.76$

& $523.658$ & $5.5$ K & $63.75$ & $1.14$\\

GConvGRU & $522.120$ & $3.9$ K & $27.00$& $0.67$

& $524.201$ & $4.2$ K & $40.50$ & $1.10$\\

GConvLSTM &  $527.754$ & $5.4$ K & $45.30$& $0.85$

& $521.450$ & $5.8$ K & $67.95$ & $1.28$\\

TGCN & $524.758$ & $6.9$ K & $29.80$ &  $1.17$

&  $524.758$ & $7.2$ K & $44.70$ & $1.80$\\ 

\method~(ours) & $\underline{498.229}$ & $\mathbf{33}$ & $\mathbf{15.60}$ & $\mathbf{0.11}$

& $\underline{499.127}$ & $\mathbf{63}$ & $\mathbf{23.40}$ & $\mathbf{0.17}$ \\

\bottomrule
\end{tabular}
}
\end{center}
\end{table*}

\begin{table*}[t]
\caption{Runtime (in minutes) and memory usage (in GB) comparison on the LargeST dataset by running on CPU for only one epoch.}
\label{Table_CPU}
\begin{center}
\resizebox{.9\textwidth}{!}{
\begin{tabular}{rcccccccccccccccr}
\toprule

&TTS & DCRNN & A3TGCN & GCLSTM & GConvGRU & GConvLSTM & TGCN & \method~(ours)\\ \midrule

Runtime & $4.06$ & $2.66$ & $14.84$ & $\underline{0.95}$ & $1.15$& $1.87$ & $5.03$  & $\mathbf{0.22}$\\ \midrule

Mem. & $11.90$ & $3.35$ & $13.86$ & $2.92$ & $\underline{2.56}$ & $3.28$ & $4.53$ & $\mathbf{0.43}$\\

\bottomrule
\end{tabular}
}
\end{center}
\end{table*}

\begin{table*}[t]
\caption{MSE results of ``Avg'' and ``Last'' baselines compared to \method.}
\label{Table_SimpleBaseline}
\begin{center}
\begin{tabular}{rcccccccccccccccr}
\toprule

Method &MetrLA & LargeST & PeMS04 & PeMS07 & PeMS08 & PvUS\\ \midrule

Avg & $20.645$ & $1125.028$ & $933.061$ & $991.629$ & $529.050$& $11.881$\\ 

Last & $18.803$ & $712.786$ & $1169.624$ & $1082.724$ & $603.319$ & $6.141$\\

\method & $17.621$ & $657.387$ & $908.361$ & $918.835$ & $499.851$ & $5.918$\\

\bottomrule
\end{tabular}
\end{center}
\end{table*}

\begin{table*}[t]
\caption{Mean and standard deviation of stability analysis.}
\label{STD_stab}
\begin{center}
\begin{tabular}{rcccccccccccccccr}
\toprule

&$p=0.1$&$p=0.3$&$p=0.5$&$p=0.7$\\ \midrule

SNR=15&0.076$\pm$0.003&0.105$\pm$0.005	&0.111$\pm$0.006&0.114$\pm$0.005\\ 

SNR=0&0.118$\pm$0.014&0.140$\pm$0.021	&0.173$\pm$0.014&0.212$\pm$0.014\\

SNR=-15&0.121$\pm$0.019&0.158$\pm$0.023&	0.249$\pm$0.140&0.264$\pm$0.077\\
\bottomrule
\end{tabular}
\end{center}
\end{table*}

\begin{table*}[t]
\caption{Standard deviation of MSE metrics.}
\label{STD_stab2}
\begin{center}
\begin{tabular}{rcccccccccccccccr}
\toprule

&AirQuality&LargeST&PeMS08&PemsBay\\\midrule

TTS&0.597&2.226&0.765&0.243\\

DCRNN&1.477&1.925&0.192&0.167\\

A3TGCN&0.996&0.0495&0.543&0.124\\

GCLSTM&0.908&0.783&0.403&0.119\\

GConvGRU&0.48&1.011&0.366&0.267\\

GConvLSTM&1.316&0.807&0.080&0.382\\

TGCN&0.316&11.573&0.395&0.485\\ 

\method &0.892&0.604&0.394&0.192\\

\bottomrule
\end{tabular}
\end{center}
\end{table*}

\subsection{Synthetic data with Stochastic Block Model (SBM) graphs}
Here, to show the flexibility of the proposed framework on other graph structures, we considered the directed Stochastic Block Model (SBM) with three communities. Besides, the community input and output edge probabilities are set as \(p_{\text{in}}=0.3\) and \(p_{\text{out}}=0.01\), respectively. We repeated the experiments on the synthetical time series on the underlying SBM graphs and put the results in Table \ref{Table_SBM}. As can be illustrated in this table, the superior and more robust reconstruction performance of the proposed \method~ framework against the SOTA in the case of limited data can be observed, which shows that the proposed framework can flexibly handle and benefit from different graph structures in the forecasting task. 

\section{Results on real-world datasets}
\label{AddReal}

\subsection{Hyperparameters}
\label{real_set}
In experiments on any of real-world datasets, we used the learning rate \(lr=0.01\), the ADAM optimizer, and the number of epochs \(n_{e}=1000\) (except for the LargeST which was \(n_e=100\)).
\subsection{Additional datasets}
\begin{itemize}
\item \textsl{MetrLA} \cite{li2018diffusion}: 
Consists of traffic reading metrics recorded on 207 highway loop detectors in Los Angeles County. These metrics were then aggregated in 5-minute segments over a four-month interval from March 2012 to June 2012.



\item \textsl{PeMS04} \cite{guo2021learning}: 
5-minute traffic readings metrics for a 2-month interval from 01/01/2018 to 02/28/2018 recorded by 307 traffic sensors in the San Francisco Bay Area.

\item \textsl{PeMS07} \cite{guo2021learning}: 
5-minute traffic readings metrics for a 4-month interval from 05/01/2017 to 08/31/2017 recorded by 883 traffic sensors in the San Francisco Bay Area.

\item \textsl{PvUS} \cite{Cini_Torch_Spatiotemporal_2022}: 
The quantified production of simulated solar power recorded by about 5,000 photovoltaic plants across the United States provided by \href{https://www.nrel.gov/}{ National Renewable Energy Laboratory (NREL)’s Solar Power Data for Integration Studies}.
\end{itemize}

In addition to the rMSE metric, we also consider Mean absolute error (MAE), Mean absolute percentage error (MAPE), MSE, and relative MAE (rMAE). The results over all of the introduced real-world datasets in terms of these metrics have been provided in Table \ref{Table_all_metrics}. Besides, the comparison of \(|\Theta|\), \(t\), and also memory usage over these datasets have been provided in Tables \ref{Table2_copy}-\ref{Table2_LargeST}. As can be deducted from these comprehensive results, the stated interpretations from the results of the main paper body are more emphasized relying that the proposed \method~significantly outperforms previous methods in terms of memory consumption and running times, all without compromising forecasting accuracy.
These results make our model highly deployable in real-world scenarios with resource constraints, promising more efficient and accessible solutions for a wide range of applications in forecasting.

\subsection{Additional results when $M$ exceeds 3}

Table \ref{Table_M6} shows additional results in the AirQuality dataset when $M>3$.
We observe that the results remain stable with bigger values of $M$.

\subsection{Runtime comparison (in minutes) on LargeST dataset by running on CPU for only one epoch}

In Table \ref{Table_CPU}, we train all the involved methods for one epoch on the large-scale dataset LargeST using only CPUs to simulate cases for possible lack of access to GPU resources.
Considering we need at least $100$ epochs to get decent forecasting performance, the nearest method to the ideal case of real-time processing is the proposed \method.


\subsection{Simplest baselines (mean of samples \& last sample)}

We have added two trivial baselines for comparison.
We consider "Avg" and "Last" as two scenarios of taking the average of the window samples and the last time sample as the forecasting prediction and have evaluated the results in Table \ref{Table_SimpleBaseline}.

\section{Standard deviations of the main results}

The standard deviation of the stability analysis in Figure \ref{fig:LimitedData} and forecasting performance in Table \ref{Table2} are provided in Table \ref{STD_stab} and \ref{STD_stab2}, respectively.

\section{Experiments Compute Resources}
All the experiments were run on one GTX A100 GPU device with 40 GB of RAM.

\end{document}